\newtheorem{lemma}{Lemma}
\newtheorem{theorem}{Theorem}
\newtheorem{corollary}{Corollary}
\newtheorem{prop}{Proposition}
\newtheorem{asm}{Assumption}
\newcommand*{\QEDA}{\hfill\ensuremath{\square}}
\title{Convex Relaxation Regression: Black-Box Optimization of Smooth Functions by Learning Their Convex Envelopes}
\author{ {\bf Mohammad Gheshlaghi Azar, Eva L. Dyer, and Konrad P. K\"ording } \\
Dept. of Physical Medicine and Rehabilitation, Northwestern University
}
\begin{document}

\maketitle

\begin{abstract}
Finding efficient and provable methods to solve non-convex optimization problems is an outstanding challenge in machine learning and optimization theory. A popular approach used to tackle non-convex problems is to use convex relaxation techniques to find a convex surrogate for the problem. Unfortunately, convex relaxations typically must be found on a problem-by-problem basis. Thus, providing a  general-purpose strategy to  estimate a convex relaxation would have a wide reaching impact. Here, we introduce  \emph{Convex  Relaxation Regression} (CoRR), an approach for learning convex relaxations for a  class of smooth functions. The main idea behind our approach is to estimate the convex envelope of a function $f$ by evaluating $f$ at a set of $T$ random points and then fitting a convex function to these function evaluations. We prove that with probability greater than $1-\delta$, the solution of our algorithm converges to the  global optimizer of $f$ with error $\mathcal{O} \Big( \big(\frac{\log(1/\delta) }{T} \big)^{\alpha} \Big)$ for some $\alpha> 0$. Our approach enables the use of convex optimization tools to solve a  class of non-convex optimization problems.
\end{abstract}

\newcommand{\A}{\mathcal A}
\renewcommand{\S}{\mathcal S}
\newcommand{\TT}{\mathsf T}
\newcommand{\X}{\mathcal X}
\newcommand{\D}{\mathcal D}
\newcommand{\G}{\mathcal G}
\newcommand{\K}{\mathcal K}
\newcommand{\calP}{\mathcal P}
\newcommand{\calI}{\mathcal I}
\newcommand{\barH}{\overline{H}}
\newcommand{\hh}{\hat h}
\renewcommand{\L}{\mathcal L}
\newcommand{\U}{\mathcal U}
\renewcommand{\O}{\mathcal O}
\newcommand{\Hyp}{\mathcal H}
\newcommand{\Y}{\mathcal Y}
\newcommand{\B}{\mathcal B}
\newcommand{\C}{\mathcal C}
\newcommand{\F}{\mathcal F}
\newcommand{\Z}{\mathcal Z}
\newcommand{\E}{\mathbb E}
\newcommand{\calE}{\mathcal E}
\newcommand{\calS}{\mathcal{S}}
\newcommand{\Cov}{\textnormal{Cov}}
\newcommand{\V}{\mathbb V}
\newcommand{\Prob}{\mathbb P}
\newcommand{\I}{\mathbb I}
\newcommand{\on}{\mathbbm 1}
\newcommand{\aon}{\mathbf 1}
\newcommand{\N}{\mathcal N}
\newcommand{\tM}{\widetilde{M}}
\newcommand{\T}{\mathcal{T}}
\newcommand{\balpha}{\boldsymbol \alpha}
\newcommand{\bmu}{\boldsymbol \mu}
\newcommand{\bSigma}{\boldsymbol \Sigma}
\newcommand{\bP}{\mathbf{P}}
\newcommand{\bhP}{\widehat{\mathbf{P}}}
\newcommand{\bT}{\mathbf{T}}
\newcommand{\bX}{\mathbf{X}}
\newcommand{\bY}{\mathbf{Y}}
\newcommand{\bx}{\boldsymbol{x}}
\newcommand{\bw}{\boldsymbol{w}}
\newcommand{\by}{\boldsymbol{y}}
\newcommand{\bU}{\mathbf{U}}
\newcommand{\bV}{\mathbf{V}}
\newcommand{\MV}{\textnormal{MV}}
\newcommand{\hMV}{\widehat{\textnormal{MV}}}
\newcommand{\barmu}{\bar\mu}
\newcommand{\hpi}{\hat\pi}
\newcommand{\tDelta}{\widetilde{\Delta}}
\newcommand{\hmu}{\widehat{\mu}}
\newcommand{\hrho}{\hat\rho}
\newcommand{\hnu}{\hat\nu}
\newcommand{\trho}{\tilde\rho}
\newcommand{\brho}{\bar\rho}
\newcommand{\hM}{\widehat{M}}
\newcommand{\tmu}{\widetilde{\mu}}
\newcommand{\tpi}{\widetilde{\pi}}
\newcommand{\barvar}{\bar\sigma^2}
\newcommand{\tvar}{\tilde\sigma^2}
\newcommand{\R}{\mathbb{R}}
\newcommand{\htheta}{\hat{\theta}}
\newcommand{\hR}{\widehat{\mathcal{R}}}
\newcommand{\tR}{\widetilde{\mathcal R}}
\newcommand{\invdelta}{1/\delta}
\newcommand{\boldR}{\mathbb R}
\newcommand{\eps}{\varepsilon}
\newcommand{\mvlcb}{\textnormal{\texttt{MV-LCB }}}
\newcommand{\ucb}{\textnormal{\textsl{UCB }}}
\newcommand{\ucbv}{\textnormal{\textsl{UCB-V }}}
\newcommand{\mvlcbt}{\textnormal{\texttt{MV-LCB(t) }}}
\newcommand{\mom}{\textnormal{MoM}}
\newcommand{\me}{\textnormal{ME}}
\newcommand{\mt}{\textnormal{MT}}
\newcommand{\eh}{1/(1-\gamma)}
\newcommand{\ehf}{\frac1{1-\gamma}}

\newcommand{\cvar}{\textnormal{C}}
\newcommand{\hcvar}{\widehat{\textnormal{C}}}
\newcommand{\var}{\textnormal{V}}
\newcommand{\hvar}{\widehat{\textnormal{V}}}

\newcommand{\avg}[2]{\frac{1}{#2} \sum_{#1=1}^{#2}}
\newcommand{\hDelta}{\widehat{\Delta}}
\newcommand{\hGamma}{\widehat{\Gamma}}

\newcommand{\todo}[1]{{\bf[#1]}}

\newcommand{\beq}{\begin{equation}}
\newcommand{\eeq}{\end{equation}}

\newcommand{\beqa}{\begin{eqnarray}}
\newcommand{\eeqa}{\end{eqnarray}}

\newcommand{\beqan}{\begin{eqnarray*}}
\newcommand{\eeqan}{\end{eqnarray*}}

\renewcommand{\P}{\mathbb{P}}
\renewcommand{\Pr}{\mathbb{P}}
\newcommand{\Q}{\mathbb{Q}}
\newcommand{\Esp}{\mathbb{E}}
\newcommand{\Var}{\mathbb{V}}
\newcommand{\indic}[1]{\mathbb{I}\{#1\}}
\newcommand{\EE}[1]{\E\left[#1\right]}
\newcommand{\wh}{\widehat}
\newcommand{\wt}{\widetilde}
\newcommand{\BlackBox}{\rule{1.5ex}{1.5ex}} 
\newcommand{\supp}{\mathop{\mathrm{supp}}}

\let\R\undefined 
\newcommand{\R}{\mathbb{R}}
\newcommand{\Real}{\mathbb{R}}
\newcommand{\Normal}{\mathcal{N}}

\newcommand{\eqdef}{\stackrel{\rm def}{=}}

\newcommand{\cl}[2][ (]{
\ifthenelse{\equal{#1}{ (}}{\left (#2\right)}{}
\ifthenelse{\equal{#1}{[}}{\left[#2\right]}{}
\ifthenelse{\equal{#1}{\{}}{\left\{#2\right\}}{}
}

\newcommand{\inset}[3][C]{
\ifthenelse{\equal{#1}{C}}{{#2}\in\mathcal{#3}}{}
\ifthenelse{\equal{#1}{T}}{{#2}\in{#3}}{}
}

\newcommand{\ESum}[4][C]
{\ifthenelse{\equal{#1}{C}}{\underset{\inset{#3}{#4}}{\sum}#2}{}
 \ifthenelse{\equal{#1}{T}}{\underset{\inset[N]{#3}{#4}}{\sum}#2}{}
\ifthenelse{\equal{#1}{U}}{\overset{#4}{\underset{#3}{\sum}}#2}{}
\ifthenelse{\equal{#1}{X}}{\sideset{}{_{#3}^{#4}}{\sum}#2}{}
\ifthenelse{\equal{#1}{S}}{\sideset{}{_{#3}^{#4}}{\sum}#2}{}
\ifthenelse{\equal{#1}{O}}{{\underset{ (#3,#4)}{\sum}}#2}{}
\ifthenelse{\equal{#1}{I}}{{\underset{#3}{\sum}}#2}{}}

\newcommand{\VF}[2][N]{
\ifthenelse{\equal{#1}{L}}{V^{\pi}_{\lambda} (#2)}{}
\ifthenelse{\equal{#1}{C}}{V{^{\pi} (#2)}}{}
\ifthenelse{\equal{#1}{T}}{V^*_{\bar{\pi}} (#2)}{}
\ifthenelse{\equal{#1}{CO}}{V{^{*} (#2)}}{}
\ifthenelse{\equal{#1}{Mx}}{V^{\pi}_{\infty} (#2)}{}
\ifthenelse{\equal{#1}{Mxo}}{V^{\pi^*}_{\infty} (#2)}{}
\ifthenelse{\equal{#1}{Mn}}{V^{\pi}_{-\infty} (#2)}{}
\ifthenelse{\equal{#1}{Mno}}{V^{\pi^*}_{-\infty} (#2)}{}
}

\newcommand{\Eval}[1][null]{
\ifthenelse{\equal{#1}{null}}{\mathbb{E}}{\mathbb{E}_{#1}}
}

\newcommand{\M}[1][]{
\mathcal{M}_{#1}
}

\newcommand{\qv}[1][null]{
\ifthenelse{\equal{#1}{null}}{Q^*}{Q^{#1}}
}

\newcommand{\subLim}[2]{
\underset{#1\rightarrow#2}{\lim}
}

\newcommand{\Norm}[2][]
{
\left\|#2\right\|_{#1}
}

\newcommand{\bldsym}{\boldsymbol}

\newtheorem{assumption}{Assumption}

\newcommand{\TODO}[1]{(\textbf{TODO: {#1}})}

\def\undertilde#1{\mathord{\vtop{\ialign{##\crcr
$\hfil\displaystyle{#1}\hfil$\crcr\noalign{\kern1.5pt\nointerlineskip}
$\hfil\tilde{}\hfil$\crcr\noalign{\kern1.5pt}}}}}

\section{Introduction}
Modern machine learning relies heavily on optimization techniques to extract information from large and noisy datasets \citep{friedman2001elements}. Convex optimization methods are widely used in machine learning applications, due to fact that convex problems can be solved efficiently, often with a first order method such as  gradient descent \citep{shalev2014understanding,sra2012optimization,boyd2004convex}. A wide class of problems can be cast as convex optimization problems; however, many important learning problems, including  binary classification with 0-1 loss, sparse and low-rank matrix recovery, and training multi-layer neural networks, are non-convex. 

In many cases, non-convex optimization problems can be solved by first relaxing the problem: {\em convex relaxation} techniques find a convex function that approximates the original objective function \citep{tropp2006algorithms,candes2010power,chandrasekaran2012convex}.  A convex relaxation is considered tight when it provides a tight lower bound to the original objective function. Examples of problems for which tight convex relaxations are  known include binary classification  \citep{cox1958regression},  sparse and low-rank approximation \citep{tibshirani1996regression, recht2010guaranteed}. The recent success of both sparse and low rank matrix recovery has demonstrated the power of convex relaxation for solving high-dimensional machine learning problems.



When a tight convex relaxation is known, then the underlying non-convex problem can often be solved by optimizing its convex surrogate in lieu of the original non-convex problem. However, there are important classes of machine learning problems for which no such relaxation is known. 
These include a wide range of machine learning problems such as training deep neural nets, estimating latent variable models (mixture density models), optimal control, reinforcement learning, and hyper-parameter optimization. Thus, methods for finding convex relaxations of arbitrary non-convex functions would have wide reaching impacts throughout machine learning and the computational sciences.
 
Here we introduce a  principled approach for black-box (zero-order) global optimization that is based on learning a convex relaxation to a non-convex function of interest (Sec.\ \ref{sec:alg}). To motivate our approach, consider the problem of estimating the convex envelope of the function $f$,  i.e., the tightest convex lower bound of the function \citep{grotzinger1985supports,falk1969lagrange,kleibohm1967bemerkungen}. In this case, we know that the envelope's minimum coincides with the minimum of the original non-convex function \citep{kleibohm1967bemerkungen}. Unfortunately, finding the \emph{exact} convex envelope of a non-convex function can be at least as hard as solving the original optimization problem. This is due to the fact that the problem of finding the convex envelope of a function is equivalent to the problem of computing its Legendre-Fenchel bi-conjugate  \citep{rockafellar1997convex,falk1969lagrange}, which is in general as hard as optimizing $f$. Despite this result, we show that for a class of smooth (non-convex) functions, it is possible to accurately and efficiently \emph{estimate} the convex envelope from a set of function evaluations. 

The main idea behind our approach, {\em Convex Relaxation Regression} (CoRR), is to empirically estimate the convex envelope of $f$ and then optimize the resulting empirical convex envelope. We do this by solving a constrained  $\ell_1$ regression problem which estimates the convex envelope by a linear combination of a set of convex functions (basis vectors). As our approach only requires samples from the function, it can be used to solve black-box optimization problems where gradient information is unknown. Whereas most methods for global optimization rely on local search strategies which find a new search direction to explore, CoRR takes a global perspective: it aims to form a global estimate of the function to ``fill in the gaps'' between samples. Thus CoRR provides an efficient strategy for global minimization through the use of convex optimization tools. 

One of the main theoretical contributions of this work is the development of guarantees that CoRR can find accurate convex relaxations for a broad class of non-convex functions (Sec.\ \ref{sec:theory}). We prove in Thm.\ \ref{thm:f.bound.exact}  that with probability greater than $1-\delta$, we can approximate the global minimizer with error of $\mathcal{O}\Big( \big(\frac{\log(1/\delta) }{T} \big)^{\alpha} \Big)$, where $T$ is the number of function evaluations and $\alpha>0$ depends upon the exponent of the H\"older-continuity bound on $f(x) - f^*$. This result assumes that the true convex envelope lies in the function class used to form a convex approximation. In Thm.\ \ref{thm:f.bound.approx}, we extend this result for the case where the convex envelope is  in the proximity of this set of functions. Our results may also translated to a bound with polynomial dependence on the dimension (Sec.\ \ref{sec:dimension}). 


The main contributions of this work are as follows. We introduce CoRR, a method for black-box optimization that learns a convex relaxation of a function from a set of random function evaluations (Sec.\ \ref{sec:alg}). Following this, we provide performance guarantees which show that as the number of function evaluations $T$ grows, the error decreases polynomially in $T$ (Sec.\ \ref{sec:theory}). In Thm.\ \ref{thm:f.bound.exact} we provide a general result for the case where the true convex envelope $f_c$ lies in the function class $\Hyp$ and extend this result to the approximate setting where $f_c \notin \Hyp$ in Thm.\ \ref{thm:f.bound.approx}. Finally, we study the performance of CoRR on several multi-modal test functions and compare it with a number of widely used approaches for global optimization (Sec.\ \ref{sec:numeric}). These results suggest that CoRR can accurately find a tight convex lower bound for a wide class of non-convex functions.


\section{Problem Setup}

We now introduce relevant notation, setup our problem, and then provide background on global optimization of non-convex functions.
\label{sec:setup}
\subsection{Preliminaries}ß
Let $n$  be a positive integer. For every  $x \in\R^n$, its $\ell_2$-norm is denoted by $\|x\|$, where $\|x\|^2:=\langle x,x \rangle$ and $\langle x,y \rangle$ denotes the inner product between two vectors $x\in \R^n$ and $y\in\R^n$. We denote the $\ell_2$ metric by $d_2$ and the set of $\ell_2$-normed  bounded vectors in $\R^n$ by $\B(\R^n)$, where for every $x\in \B(\R^n)$ we assume  that there exists some finite scalar $C$ such that $\|x\|< C$.  Let  $(\X, d) $ be a metric space, where $\X\in\B(\R^n)$ is a  convex set of bounded vectors and $d(.,x)$ is convex w.r.t. its first argument for every $x\in \B(\R^n)$.\footnote{This also implies that $d(x,.)$ is convex w.r.t. its second argument argument for every $x\in \B(\R^n)$ due to the fact that the metric $d$ by definition is symmetric.} We denote the set of all  bounded functions on $\X$ by $\B(\X,\R)$, such that for every $f \in  \B(\X, \R)$ and $x\in \X$ there exists some finite scalar $C>0$ such that  $|f(x)|\leq C$. Finally, we denote the  set of all convex bounded functions on $\X$ by $\C(\X,\R)\subset\B(\X,\R)$. Also for every $\Y\subseteq\B(\R^n)$, we denote the convex hull of $\Y$ by ${\rm conv}(\Y)$. Let $\B(x_0, r)$ denote an open ball of radius $r$ centered at $x_0$.  Let $\mathbbm{1}$ denote a vector of ones.

The convex envelope of function $f:\X\to\R$ is denoted by $f_c:\X \to\R$. Let  $\wt \Hyp$  be  the set of all convex functions defined over $\X$ such that  $h(x)\leq f(x)$ for all $x\in\X$. The function $f_c$ is the convex envelope of $f$ if  for every $x\in\X$ \textbf{(a)} $f_c(x)\leq f(x)$, \textbf{(b)}  for every $h\in\wt \Hyp$ the inequality $h(x)\leq f_c(x)$ holds. Convex envelopes are also related to the concepts of the convex hull and the epigraph of a function. For every function  $f:\X\to\R$ the epigraph is defined as
$\mathrm{epi}f=\{(\xi,x):\xi\geq f(x),x\in \X\}.$
 One can then show that the convex envelope of $f$ is obtained by
$f_c(x)=\inf\{\xi:(\xi,x) \in {\rm conv}({\rm epi}f)\},~\forall x\in \X.$

In the sequel, we will generate a set of function evaluations from $f$ by evaluating the function over i.i.d. samples from $\rho$. $\rho$ denotes a probability distribution on $\X$ such that $\rho(x) > 0$ for all $x\in \X$. In addition, we approximate the convex envelope using a function class $\Hyp$ that contains a set of convex functions $h(\cdot ;\theta) \in\Hyp$ parametrized by $\theta \in  \Theta \subseteq \B(\R^p)$. We also assume that every $h\in \Hyp$ can be expressed as a linear combination of  a set of basis  $\phi:\X\to\B(\R^p)$, that is, $h(x;\theta) = \langle\theta,\phi(x)\rangle$ for every $h(\cdot;\theta)\in\Hyp$ and $x\in \X$. 


\subsection{Black-box Global Optimization Setting}
\label{sec:survey.black}
We consider a  \emph{black-box} (zero-order) global optimization setting, where we assume that we do not have access to information about the gradient of the function that we want to optimize. More formally, let $\F\subseteq \B(\X,\R)$ be a class of bounded functions, where  the image of  every $f\in\F$ is bounded by $R$ and $\X$ is a convex set. We consider the problem of finding the global minimum of the function $f$,
\begin{equation}
\label{eq:fmin}
f^*:=\underset {x\in \X}{\min} ~f(x).
\end{equation}
We denote the set of minimizers of $f$ by $\X^*_f\subseteq \X$.   

In the black-box setting,  the optimizer  has only access to the inputs and outputs of the function $f$. 
In this case, we assume that our optimization algorithm is provided with a set of input points $\wh \X=\{x_1,x_2,\dots,x_T\}$ in $\X$ and a sequence of outputs $[f]_{\wh\X}  = \{ f(x_1),f(x_2),\dots,f(x_T) \}$.  Based upon this information, the goal is to find an estimate $\wh x\in\X$, such that the error $ f(\wh x)-f^*$ becomes as small as possible.

\subsection{Methods for Black-box Optimization}
Standard tools that are used in convex optimization, cannot be readily applied to solve non-convex problems as they only converge to local minimizers of the function. Thus, effective global optimization approaches must have a mechanism to avoid getting trapped in local minima. In low-dimensional settings, performing an exhaustive grid search or drawing random samples from the function can be sufficient \citep{bergstra2012random}. However, as the dimension grows, smarter methods for searching for the global minimizer are required. 


{\bf Non-adaptive search strategies.}~~A wide range of global optimization methods are build upon the idea of iteratively creating a deterministic set (pattern) of points at each iteration, evaluating the function over all points in the set, and selecting the point with the minimum value as the next seed for the following iteration \citep{hooke1961direct,lewis1999pattern}. Deterministic pattern search strategies can be extended by introducing some randomness into the pattern generation step. For instance, simulated annealing \citep{kirkpatrick1983optimization} (SA) and genetic algorithms \citep{back1996evolutionary} both use randomized search directions to determine the next place that they will search. The idea behind introducing some noise into the pattern, is that the method can jump out of local minima that deterministic pattern search methods can get stuck in. While many of these search methods work well  in low dimensions, as the dimension of problem grows, these algorithms often become extremely slow due to the curse of dimensionality.

{\bf Adaptive and model-based search.} ~~In higher dimensions, adaptive and model-based search strategies can be used to further steer the optimizer in good search directions \citep{mockus1978application, hutter2009automated}. For instance, recent results in Sequential Model-Based Global Optimization (SMBO) have shown that Gaussian processes are useful priors for global optimization \citep{mockus1978application, bergstra2011algorithms}. In these settings, each search direction is driven by a model (Gaussian process) and updated based upon the local structure of the function. {These techniques, while useful in low-dimension problems, become inefficient  in high-dimensional settings.

Hierarchical search methods take a different approach in exploiting the structure of the data to find the global minimizer \citep{MAL-038, azar2014stochastic,munos2011optimistic}. The idea behind hierarchical search methods is to identify regions of the space with small function evaluations to sample further (exploitation), as well as generate new samples in unexplored regions (exploration). One can show that it is possible to find the global optimum with a finite number of function evaluations using hierarchical search; however, the number of samples needed to achieve a small error increases exponentially with the dimension. For this reason, hierarchical search methods are often not efficient for high-dimensional problems. 


{\bf Graduated optimization.}~~Graduated optimization methods \citep{blake1987visual,yuille1989energy}, are another class of methods for non-convex optimization which have received much attention in recent years \citep{chapelle2010gradient,Dvijotham14,hazan2015graduated,mobahi2015theoretical}. These methods work by locally smoothing the problem, descending along this smoothed objective, and then gradually sharpening the resolution to hone in on the true global minimizer. Recently \citet{hazan2015graduated} introduced a graduated optimization approach that can be applied in the  black-box optimization setting. In this case, they prove that for a class of functions referred to as $\sigma$-nice functions, their approach is guaranteed to converge to an $\eps$-accurate estimate of the global minimizer at a rate of $\mathcal{O}(n^2/\eps^4)$. To the best of our knowledge, this result represents the state-of-the-art in terms of theoretical results for global black-box optimization.



\section{Algorithm}

\label{sec:alg}

In this section, we introduce \emph{Convex  Relaxation Regression} (CoRR), a black-box optimization approach for global minimization of a bounded function $f$. 

\subsection{Overview} 
\label{sec:alg.motiv}
The main idea behind our approach is to estimate the convex envelope $f_c$ of a function and minimize this surrogate in place of our original function. The following result guarantees that the minimizer of $f$ coincides with the minimizer of $f_c$. 
\begin{prop}[\citealt{kleibohm1967bemerkungen}]
\label{prop:convexenv}
Let $f_c$ be the convex envelope of $f:\X\to\R$. Then  (a) $\min_{x\in\X}f_c(x)= f^\ast$ and (b) $\X^\ast_f\subseteq \X^\ast_{f_c}$.
\end{prop}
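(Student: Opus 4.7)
The plan is to derive both parts directly from the two defining properties of the convex envelope: (a) $f_c(x)\le f(x)$ for every $x\in\X$, and (b) $h(x)\le f_c(x)$ for every $x\in\X$ and every convex $h$ with $h\le f$ on $\X$.

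For part (a) of the proposition, I would first use property (a) of the envelope to get the easy inequality $\min_{x\in\X} f_c(x) \le \min_{x\in\X} f(x) = f^\ast$. For the reverse inequality, the key trick is to exhibit a specific convex lower bound of $f$: consider the constant function $g(x) \equiv f^\ast$. By the definition of $f^\ast$, $g(x) = f^\ast \le f(x)$ on $\X$, and a constant function is trivially convex, so $g \in \widetilde{\Hyp}$. Property (b) of the envelope then gives $f^\ast = g(x) \le f_c(x)$ for every $x\in\X$, and minimizing both sides over $x$ yields $f^\ast \le \min_{x\in\X} f_c(x)$. Combining the two inequalities gives $\min_{x\in\X} f_c(x) = f^\ast$.

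For part (b), take any $x^\ast\in \X^\ast_f$, so $f(x^\ast) = f^\ast$. Using property (a) of the envelope, $f_c(x^\ast) \le f(x^\ast) = f^\ast$. By part (a) of the proposition, which I will have just established, $f^\ast$ is the minimum value of $f_c$, so $f_c(x^\ast) \ge f^\ast$ as well. Hence $f_c(x^\ast) = f^\ast = \min_{x\in\X} f_c(x)$, which means $x^\ast \in \X^\ast_{f_c}$, proving the inclusion.

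There is no real obstacle here; the whole argument fits on a few lines and the only mildly non-obvious step is recognizing that the constant function $f^\ast$ is an admissible competitor in the family $\widetilde{\Hyp}$, which is precisely what forces the envelope's minimum not to dip below $f^\ast$. Note also that the inclusion in (b) is generally strict: the envelope can attain the value $f^\ast$ at points that are not minimizers of $f$, e.g. along any segment whose endpoints lie in $\X^\ast_f$ the envelope stays at $f^\ast$ by convexity, which illustrates why only one inclusion is claimed.
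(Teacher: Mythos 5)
Your proof is correct, but it cannot be ``the same as the paper's'' for a simple reason: the paper never proves Proposition~\ref{prop:convexenv} at all --- it is imported from \citet{kleibohm1967bemerkungen} and used as a black box. The closest argument in the paper is the proof of Lemma~\ref{lem:opt.convex} in the appendix, which establishes the strictly stronger conclusion $\X^\ast_{f_c}=\X^\ast_f$ under the extra hypothesis that $\X^\ast_f$ is convex, and it does so by a heavier route: it invokes the epigraph characterization $f_c(x)=\inf\{\xi:(\xi,x)\in{\rm conv}({\rm epi}f)\}$ and derives a contradiction from writing $(f^\ast,\wt x)$ as a convex combination of two points of ${\rm epi}f$. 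Your argument is more elementary and self-contained: the only tool is the definition of the envelope, with the constant function $g\equiv f^\ast$ as the admissible competitor in $\wt\Hyp$ forcing $f_c\geq f^\ast$ pointwise, after which part (b) is a two-line squeeze. What the paper's epigraph machinery buys is exactly the converse inclusion your proof does not (and should not) attempt; what your approach buys is a clean proof of the cited facts without any convex-hull geometry. One small point of rigor: in part (a) you should read $\min_{x\in\X}f_c(x)$ as an infimum until attainment is shown, but your own part (b) supplies attainment, since $f^\ast\leq f_c(x^\ast)\leq f(x^\ast)=f^\ast$ at any $x^\ast\in\X^\ast_f$ (the paper assumes $f$ attains its minimum). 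Finally, your closing observation is apt and worth keeping: by convexity $f_c\equiv f^\ast$ on ${\rm conv}(\X^\ast_f)$, so the inclusion in (b) is strict whenever $\X^\ast_f$ fails to be convex --- which is precisely the failure mode that Assumption~\ref{asm:convex}c and Lemma~\ref{lem:opt.convex} are designed to exclude.
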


This result suggests that one can find the minimizer of $f$ by optimizing its convex envelope. Unfortunately, finding the exact convex envelope of a function is difficult in general. However, we will show that, for a certain class of functions, it is possible to estimate the convex envelope accurately from a set of function evaluations. Our aim is to estimate the convex envelope by fitting a convex function to these function evaluations.
 
 The idea of fitting a convex approximation to samples from $f$ is quite simple and intuitive. However, the best unconstrained convex fit to $f$ does not necessarily coincide with  $f_c$.  Determining whether there exists a set of convex constraints under which the best convex fit to $f$ coincides with $f_c$ is an open problem. The following lemma, which is key to efficient optimization of $f$ with CoRR, provides a solution. This lemma transforms our original non-convex optimization problem to a  least-absolute-error regression problem with a convex constraint, which can be solved using convex optimization tools. 





\begin{lemma}
\label{lem:const.opt}
Let every $h\in\Hyp$ and $f$ be $\lambda$-Lipschitz for some $\lambda>0$. Let $ L(\theta) = \E [ |h(x;\theta)-f(x)| ]$ be the expected loss, where the expectation is taken with respect to the distribution $\rho$.  Assume that there exists $\Theta_c\subseteq \Theta$ such that for every $\theta\in \Theta_c$,  $h(x;\theta)=f_c(x)$ for all $x\in \X$.    Consider the following optimization problem:

\begin{equation}
\label{eq:opt.const1}
\theta_\mu=\underset{\theta\in\Theta}{\arg\min}~~ L(\theta) ~~\textrm{s.t. } ~~ \E[ h(x;\theta) ]= \mu.
\end{equation}

Then there exists a scalar $\mu \in [-R, R]$ for which  $\theta_c\in\Theta_c$. In particular, $\theta_c\in\Theta_c$ when $\mu =\E(f_c(x))$.

\end{lemma}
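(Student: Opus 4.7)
The plan is to exploit the one-sided inequality $f_c(x)\le f(x)$, which lets the absolute value in $L$ collapse to a signed quantity whose expectation is pinned down by the constraint. Concretely, I will first evaluate $L(\theta_c)$ exactly for any $\theta_c\in\Theta_c$, then lower bound $L(\theta)$ for an arbitrary feasible $\theta$, and finally match these two quantities when $\mu=\E[f_c(x)]$.

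Step one: take any $\theta_c\in\Theta_c$. Then $h(x;\theta_c)=f_c(x)$ pointwise, and since $f_c$ is the convex envelope of $f$ we have $f_c(x)\le f(x)$ for every $x\in\X$. Hence $|h(x;\theta_c)-f(x)|=f(x)-f_c(x)$, and integrating against $\rho$ gives
\[
L(\theta_c)\;=\;\E[f(x)]-\E[f_c(x)].
\]
In particular, $\theta_c$ satisfies $\E[h(x;\theta_c)]=\E[f_c(x)]$, so it is feasible for the program with $\mu=\E[f_c(x)]$.

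Step two: fix $\mu=\E[f_c(x)]$ and let $\theta\in\Theta$ be any feasible point, i.e. $\E[h(x;\theta)]=\mu$. Using the trivial bound $|a|\ge -a$ applied to $a=h(x;\theta)-f(x)$, I get
\[
L(\theta)\;=\;\E\bigl[|h(x;\theta)-f(x)|\bigr]\;\ge\;\E\bigl[f(x)-h(x;\theta)\bigr]\;=\;\E[f(x)]-\mu\;=\;\E[f(x)]-\E[f_c(x)].
\]
Combining with Step one gives $L(\theta)\ge L(\theta_c)$ for every feasible $\theta$, so $\theta_c$ attains the infimum. Thus one may take $\theta_\mu=\theta_c\in\Theta_c$, proving the claim with the explicit choice $\mu=\E[f_c(x)]$. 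Since $f$ is bounded by $R$ and $f_c\le f$, the value $\mu=\E[f_c(x)]$ lies in $[-R,R]$, matching the range stated in the lemma.

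There is no genuine obstacle: the argument is a one-line observation disguised as an optimization statement. The only subtleties to be careful about are (i) confirming that $\theta_c$ is feasible (which follows because $\mu$ is chosen precisely as $\E[f_c(x)]$) and (ii) handling the case where the argmin is not unique, in which case the conclusion is read as ``some minimizer lies in $\Theta_c$.'' The Lipschitz hypotheses on $h$ and $f$ play no role in this particular lemma (they merely guarantee integrability of $L$ and are needed later in the paper); the convexity of $h(\cdot;\theta)$ and the defining property $f_c\le f$ are the only facts actually used.
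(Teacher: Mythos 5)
Your argument is correct for what it proves, but it proves less than the lemma needs: you show that $\theta_c$ \emph{attains} the constrained minimum, i.e., that \emph{some} minimizer lies in $\Theta_c$, and you explicitly punt on non-uniqueness by reinterpreting the conclusion. The paper's proof establishes the strictly stronger fact that the minimizer set is \emph{exactly} $\Theta_c$: for every feasible $\theta$ with $h(\cdot;\theta)\neq f_c$ it derives the strict inequality $L(\theta)>L(\theta_c)$, by first showing that any convex underestimate of $f$ other than $f_c$ violates the constraint ($\E[h(x;\theta)]<\E[f_c(x)]$), so any feasible $h\neq f_c$ must overestimate $f$ at some point $x_o$, and then using the Lipschitz hypothesis to inflate the pointwise gap into a ball of positive $\rho$-measure on which $|h-f|>f-h$ strictly. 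This strictness is not decorative: the main text's correctness argument (``$f_c$ is the only minimizer of $L(\theta)$ that satisfies the constraint,'' used to conclude $f(x_{\mu_c})=f^*$) requires that \emph{whatever} the solver returns for Eqn.~(2) at $\mu=\E[f_c(x)]$ is $f_c$. Under your weak reading, the program could return a feasible minimizer $h\neq f_c$ whose own minimizer lies far from $\X^*_f$, and the downstream argument collapses. So there is a genuine gap, and your side remark that the Lipschitz hypotheses ``play no role'' is precisely the symptom: they are what powers the strictness step you omitted (a pointwise strict inequality at a single $x_o$ says nothing about an expectation without it).

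The gap is, however, easily closed within your own framework, and arguably more cleanly than in the paper. Your bound $|a|\geq -a$ holds with equality iff $a\leq 0$, so $L(\theta)=L(\theta_c)$ forces $h(x;\theta)\leq f(x)$ for $\rho$-almost every $x$; continuity of $h$ and $f$ (this is where Lipschitzness enters) together with $\rho>0$ on $\X$ upgrades this to $h\leq f$ everywhere. Since $h(\cdot;\theta)$ is convex (a hypothesis your Step two never used), the defining maximality of the convex envelope gives $h\leq f_c$ pointwise, and then the constraint $\E[h(x;\theta)]=\E[f_c(x)]$ with $h\leq f_c$ and full-support $\rho$ forces $h=f_c$, i.e., $\theta\in\Theta_c$. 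With that equality-case analysis appended, your proof recovers the paper's full conclusion by a different route: the paper argues via a quantitative Lipschitz-ball construction around an overestimation point, while your route isolates exactly where each hypothesis is needed (convexity of $h$ and the envelope property for the equality case, continuity plus full support to pass from a.e.\ to everywhere) and avoids the $\eps/2\lambda$ bookkeeping entirely.
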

The formal proof of this lemma is provided in \ref{proof:lemma1} (see Supp.\ Materials). We prove this lemma by showing that for every  $\theta\in\Theta$ where $\E[ h(x;\theta) ]=\E[ f_c(x)]$, and for every $\theta_c\in\Theta_c$, the loss $L(\theta)\geq L(\theta_c)$.  Equality is attained only when $\theta \in \Theta_c$. Thus, $f_c$ is the only minimizer of $L(\theta)$ that satisfies the constraint $\E [ h(x;\theta) ] =\E [ f_c(x)]$.  

{\bf Optimizing $\mu$.}~~Lem.~\ref{lem:const.opt} implies that, for a certain choice of $\mu$,  Eqn.~\ref{eq:opt.const1} provides us with the convex envelope $f_c$. However, finding the exact value of  $\mu$ for which this result holds is difficult, as it requires knowledge of the envelope not available to the learner.  Here we use an alternative approach to find $\mu$ which guarantees that the optimizer of $h(\cdot;\theta_{\mu})$ lies in the set of true optimizers $\X^*_f$. Let $x_{\mu}$ denote the minimizer of $h(\cdot;\theta_\mu)$. We find a $\mu$  which minimizes  $f(x_\mu)$:
\begin{equation} 
\label{eq:fmin}
\mu^{\ast} =  \underset{\mu\in [-R,R] }{\arg\min} ~  f( x_{\mu}).
\end{equation} 
Interestingly, one can show that $x_{\mu^\ast}$ lies in the set $\X^*_f$. To prove this, we use the fact that the minimizers of the convex envelope $f_c$ and $f$ coincide (see Prop.\ \ref{prop:convexenv} and Lem.~\ref{lem:opt.convex} in Supp. Material). This implies that $f(x_{\mu_c})=f^*$, where $\mu_c:=\E(f_c(x))$.   It then follows  that  $f^*=f(x_{\mu_c})\geq\min_{\mu\in[-R,R]} f(x_{\mu})=f(x_{\mu^*})$. This combined with the fact that $f^*$ is the minimizer of $f$ implies that $f(x_{\mu^*})=f^*$ and thus $x_{\mu^*}\in\X^*$. 

\subsection{Optimization Protocol}
We now describe how we use the ideas presented in Sec.~\ref{sec:alg.motiv} to implement CoRR (see Alg.\ \ref{algo:CORR} for pseudocode). Our approach for black-box optimization requires two main ingredients: (1) samples from the function $f$ and (2) a function class $\Hyp$ from which we can form a convex approximation $h$. In practice, CoRR is initialized by first drawing two sets of $T$ samples $ \wh \X_1$ and $ \wh \X_2$ from the domain $\X \subseteq \B(\R^n)$ and evaluating $f$ over both of these sets. With these sets of function evaluations (samples) and a function class $\Hyp$ in hand, our aim is to learn an approximation $h(x; \theta)$ to the convex envelope of $f$. Thus for a fixed value of $\mu$, we solve the following constrained optimization problem (see the \texttt{OPT} procedure in Alg.~\ref{algo:CORR}):
\begin{equation}
\label{eq:CORR.core}
\begin{aligned}
\wh \theta_c=\arg\min_{\theta \in \Theta}~ \wh \E_1 \big[ |h(x;\theta)-f(x)| \big]~ \text{s.t.}~ \wh \E_2\big[ h(x;\theta) \big] = \mu,
\end{aligned}
\end{equation}  
where the empirical expectation $\wh \E_i [ g(x) ]:=1/T \sum_{x\in\wh \X_i} g(x),$ for every  $g\in\B(\X,\R)$ and $i\in\{1,2\}$. We provide pseudocode for optimizing Eqn.\ \ref{eq:CORR.core} in the $\texttt{OPT}$ procedure of Alg.\ \ref{algo:CORR}.

The optimization problem of Eqn.~\ref{eq:CORR.core} is an empirical approximation of the optimization problem in Eqn.~\ref{eq:opt.const1}. However, unlike Eqn.~\ref{eq:opt.const1}, in which $L(\theta)$  is not easy to evaluate and optimize,  the empirical loss can be optimized efficiently using standard convex optimization techniques. In addition, one can establish bounds on the error $|L( \wh \theta_c)- L(\theta_c)|$ in terms of the sample size $T$ using standard results from the literature on stochastic convex optimization \citep[see, e.g., Thm.~1 in][]{shalev2009stochastic}. Optimizing the empirical loss provides us with an accurate estimate of the convex envelope as the number of function evaluations increases. 

The search for the best $\mu$ (Step 2 in Alg.\ \ref{algo:CORR}) can be done by solving Eqn.\ \ref{eq:fmin}. As $\mu$ is a scalar with known upper and lower bounds, we can employ a number of hyper-parameter search algorithms \citep{munos2011optimistic,  bergstra2011algorithms} to solve this 1D optimization problem. These algorithms guarantee fast convergence to the global minimizer in low dimensions and thus can be used to efficiently search for the solution to Eqn.\ \ref{eq:fmin}. Let $\wh \mu$ denote the final estimate of $\mu$ obtained in Step 2 of Alg.\ \ref{algo:CORR} and let $h(\cdot; \theta_{\wh \mu})$ denote our final convex approximation to $f_c$. The final solution $\wh x_{\wh \mu}$ is then obtained by optimizing  $h(\cdot;\theta_{\wh \mu})$ (Step 2 of \texttt{OPT}). 

\renewcommand{\algorithmicrequire}{\textbf{Input:}}
\renewcommand{\algorithmicensure}{\textbf{Output:}}

\begin{algorithm}[t!]
  
 \begin{algorithmic}[1]
\Require A black-box function $f$ which returns a sample $f(x)$ when evaluated at a point $x$. The number of samples $N$ to draw from $f$. A class $\Hyp\subseteq\B(\X,\R)$ of convex functions in $\X$ (parametrized by $\theta$), a scalar $R$ for which $\| f \|_{\infty} \le R$, a sampling distribution $\rho$ supported over $\X$.
\vspace{2mm}
\State{{\bf Random function evaluations.} Draw $2N$ i.i.d. samples according to the distribution $\rho$ and partition them into two sets, $\wh \X = \{\wh \X_1, \wh \X_2 \}$. Generate samples $[f]_{\wh \X_1}$ and $[f]_{\wh \X_2}$ , where $[f]_{\wh\X_i} = \{ f(x): x \in \wh \X_i \}, ~~ i = \{ 1, 2 \}\}$. Denote $[f]_{\wh \X}=\{[f]_{\wh \X_1},[f]_{\wh \X_2}\}$}
\vspace{1mm}
\State{ \textbf{Optimize for $\mu$.}~~Solve the 1D optimization problem 
\begin{equation*} 
\wh \mu =  \underset{\mu\in [-R,R] }{\arg\min} ~  f (\texttt{OPT}(\mu,  [f]_{\wh\X}) ),
\end{equation*} }
\Ensure $\wh x_{\widehat{\mu}} = $\texttt{OPT}($\wh \mu, [f]_{\wh\X}$) .
\end{algorithmic}
\noindent\rule{8.3cm}{0.4pt}
\begin{algorithmic}[1]
\Statex{\textbf{Procedure} \texttt{OPT}($ \mu, [f]_{\wh\X}$)}
  \setcounter{ALG@line}{0}
\State{ \textbf{Estimate the convex envelope.}~~Estimate $\wh f_c=h(\cdot;\wh \theta_\mu)$ by solving Eqn.\ \ref{eq:CORR.core}.}

\State{\textbf{ Optimize the empirical convex envelope.}~~Find an optimizer $\wh x_\mu$ for $\wh f_c$ by solving } 
\begin{equation*} 
\wh x_{\mu}=\underset{x\in \X }{\min} ~ \wh f_c(x),
\end{equation*}
 \Return $\wh x_{\mu}$
\end{algorithmic}
 \caption{Convex Relaxation Regression (CoRR) \label{algo:CORR}}
\end{algorithm} 

To provide further insight into how CoRR works, we point the reader to Fig.\ \ref{fig:demofig}. Here, we show examples of the convex surrogate obtained by \texttt{OPT} for different values of $\mu$. We observe that  as we  vary $\mu$,  the minimum error is attained for $\mu \approx 0.47$. However, when we analytically compute the empirical expectation of convex envelope ($\wh E_2 [ f_c (x)]=0.33$) and use this value for $\mu$, this produces a larger function evaluation. This may seem surprising, as we know that if we set $\mu = \E(f_c(x))$, then the solution of Eqn.~\ref{eq:opt.const1} should provide us the exact convex envelope with the same optimizer as $f$. This discrepancy can be explained by the approximation error introduced through solving the empirical version of Eqn.~\ref{eq:opt.const1}. This figure also highlights the stability of our approach for different values of $\mu$. Our results suggest that our method is robust to the choice of $\mu$, as a wide range of values of $\mu$ produce minimizers close to the true global minimum. Thus CoRR provides an accurate and robust approach for finding the global optimizer of $f$.

\begin{figure}[t!]
\includegraphics[width=1\textwidth]{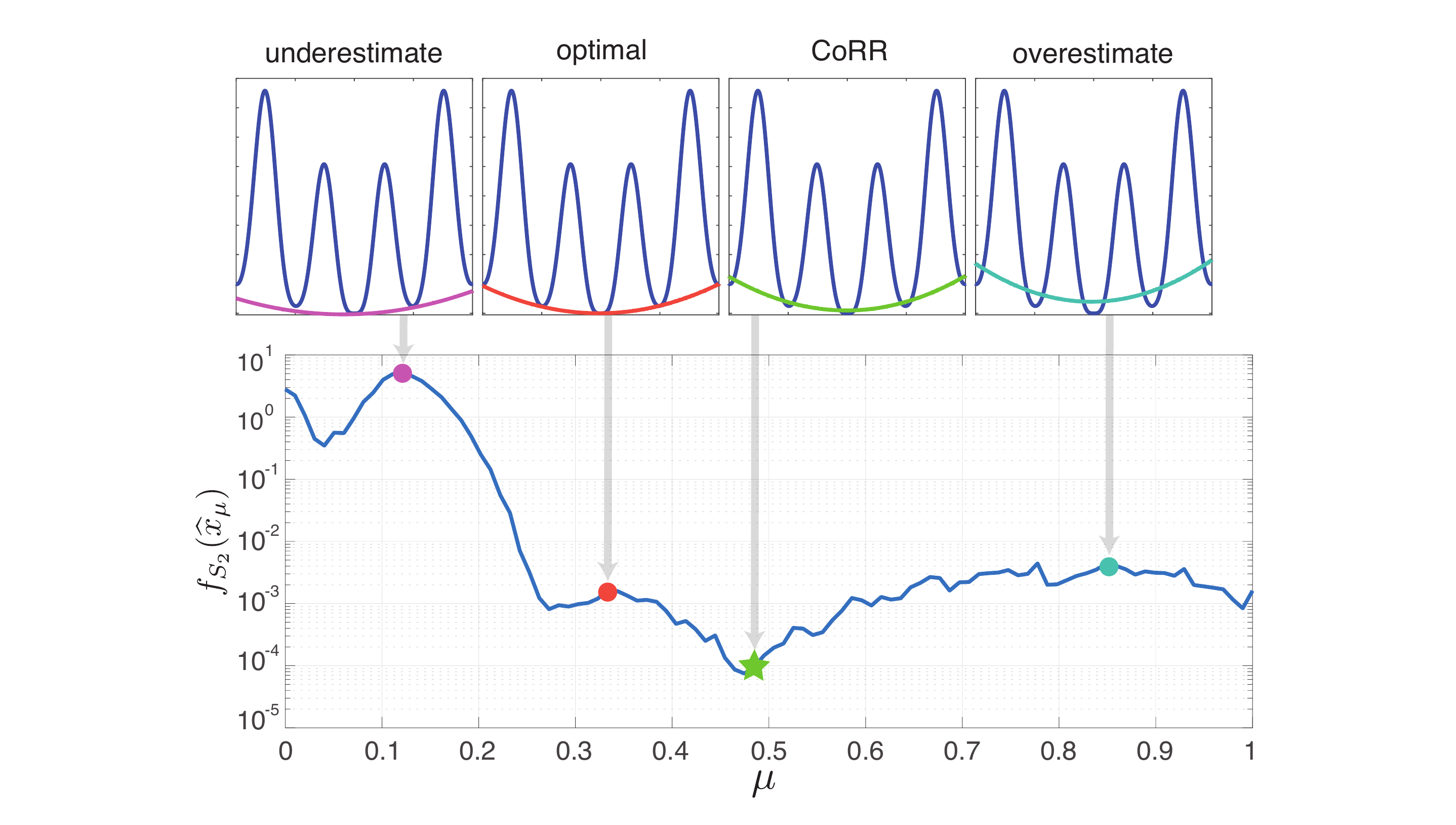}
\vspace{-5mm}
\caption{{\em Estimating the convex envelope of $f$ with CoRR.} Here we demonstrate how CoRR learns a convex envelope by solving Eqn.\ \ref{eq:fmin}. Along the top, we plot the test function $f_{S_2}$ (see Sec.\ \ref{sec:numeric}) and examples of the convex surrogates obtained for different values of $\mu$. From left to right, we display the surrogate obtained for: an underestimate of $\mu$, the empirical estimate of the convex envelope where $\mu \approx \wh{\mathbb{E}}_2[ f_c(x) ]$, the result obtained by CoRR, and an overestimate of $\mu$. Below, we display the value of the function $f_{S_2}$ as we vary $\mu$ (solid blue).}
\label{fig:demofig}
\end{figure}

\section{Theoretical Results}

\label{sec:theory}

In this section, we provide our main theoretical results. We show that as the number of function evaluations $T$  grows, the solution of CoRR converges to the global minimum of $f$ with a polynomial rate. We also discuss the scalability of our result to high-dimensional settings.

\subsection{Assumptions}
We begin by introducing the assumptions required to state our results. The first assumption provides the necessary constraint on the candidate function class $\Hyp$ and the set of all points in $\X$ that are minimizers for the function $f$. 

\begin{asm}[Convexity]
\label{asm:convex} 
Let $\X^*_f$ denote the set of minimizers of $f$. We assume that the following three convexity assumptions hold with regard to every $h(\cdot ; \theta)\in\Hyp$ and $\X^*_f$: (a) $h(x;\theta)$ is a convex function for all $x\in\X$, (b) $h$ is a affine function of $\theta\in\Theta~$ for all $x\in\X$, and (c) $\X^*_f$ is a convex set.
\end{asm}
{\bf Remark.}~Assumption \ref{asm:convex}c does not impose convexity on the function $f$. Rather, it requires  that the set  $\X^*_f$ is convex. This is needed to guarantee that both $f_c$ and $f$ have the same minimizers (see Prop.~\ref{prop:convexenv}). Assumption \ref{asm:convex}c  holds for a large class of non-convex functions. For instance, every continuous function with a unique minimizer satisfies this assumption (see, e.g., our example functions in Sect.~\ref{sec:numeric}). %

Assumption\ \ref{asm:smoothness} establishes the necessary smoothness assumption on the function  $f$ and the function class $\Hyp$.
 
\begin{asm}[Lipschitz continuity]
\label{asm:smoothness}
We assume that $f$ and $h$ are Lipschitz continuous. That is for every $(x_1,x_2)\in \X^2$ we have that $|f(x_1)-f(x_2)| \leq d(x_1,x_2)$. Also for every $x\in \X$ and $(\theta_1,\theta_2)\in \Theta^2$ we have that $|h(x;\theta_1)-h(x;\theta_2)|\leq  U d_2( \theta_1,\theta_2)$. We also assume that every $h\in \Hyp$ is $\lambda$-Lipschitz on $\X$ w.r.t. the metric $d$  for some $\lambda>0$.
\end{asm}

We show that the optimization problem of Eqn.~\ref{prop:convexenv} provides us with the convex envelope  $f_c$ when  the candidate class $\Hyp$ contains $f_c$ (see Lem.~\ref{lem:const.opt}). The following assumption formalizes this condition. 

\begin{asm}[Capacity of  $\Hyp$]
\label{asm:capacity}
We assume that  $f_c \in \Hyp$, that is,  there exist some $h\in\Hyp$  and $\Theta\subseteq\Theta_c$  such that $h(x;\theta)=f_c(x)$ for every $x\in \X$ and $\theta\in \Theta_c$.
\end{asm}

We also require that the following H\"older-type error bounds hold for the distances of our empirical estimates $\widehat x$ and $\widehat \theta$ from  $\X^*_f$ and $\Theta_c$, respectively.

\begin{asm}[H\"older-type error bounds]
\label{asm:eps.opt}
Let  $\Theta^e:=\{\theta|\theta\in \Theta,\E( h(x;\theta) )=f_c(x)\}$.  Also denote $L^*:=\min_{\theta\in\Theta^e}L(\theta)$.  We assume that there exists some finite positive scalars $\gamma_1$,  $\gamma_2$, $\beta_{1}$   and $\beta_2$ such that for every $x\in\X$ and $\theta\in\Theta^e$: (a) $f(x)-f^* \geq\gamma_1 d(x, \X^*_f)^{1/\beta_1} $. (b) $L(\theta) - L^*  \geq  \gamma_2 d_2(\theta,\Theta_c)^{1/\beta_2}$.
\end{asm}


Assumption~\ref{asm:eps.opt} implies that whenever  the error terms $f(x)-f^*$ and $L(\theta)-L^*$  are small, the distances $d(x,\X^*_f)$  and  $d_2(\theta, \Theta_c)$ are small as well. To see why Assumption~\ref{asm:eps.opt} is required for the analysis of CoRR, we note that the  combination of Assumption~\ref{asm:eps.opt} with Assumption~\ref{asm:smoothness} leads to the following \emph{local} bi-H\"older inequalities for every $x\in \X$ and $\theta\in\Theta^e$:
\begin{equation}
\label{eq:biHolder.ineq}
\begin{aligned}
\gamma_1 d(x, \X^*_f)^{1/\beta_1} &\leq f(x)-f^* \leq d(x, \X^*_f)
\\
\gamma_2 d_2(\theta, \Theta_c)^{1/\beta_2} &\leq L(\theta)-L^* \leq U d_2(\theta, \Theta_c)
\end{aligned}
\end{equation}

These inequalities determine the behavior of function $f$ and $L$ around their minimums as they establish upper and lower bounds on the  errors  $f(x)-f^*$ and $L(\theta)-L^*$.  Essentially,  Eqn.\ \ref{eq:biHolder.ineq} implies that  there is a direct relationship between $d(x, \X^*_f)~(d_2(\theta, \Theta_c))$ and  $f(x)-f^*~(L(\theta)-L(\Theta_c))$. Thus,  bounds on $d(x, \X^*_f)$  and $d_2(\theta, \Theta_c)$, respectively, imply bounds on $f(x)-f^*$ and  $L(\theta)-L(\Theta_c)$  and vice versa. These bi-directional bounds are needed due to the fact that CoRR  doest not directly optimize the  function. Instead it optimizes the surrogate loss $L(\theta)$ to find the convex envelope and then it optimizes this  empirical convex envelope to estimate the global minima. This implies that the standard result of optimization theory can only be applied to bound the error $L(\wh\theta)-L^*$.  The inequalities of Eqn.~\ref{eq:biHolder.ineq} are then  required to convert the  bound on $L(\wh\theta)-L^*$ to a bound on  $f(\wh x_{\wh \mu})-f^*$, which ensures that the solution of CoRR converges to a global minimum as $L(\wh\theta)-L^*\to 0$.



It is noteworthy that global error bounds such as those in Assumption~\ref{asm:eps.opt} have been extensively  analyzed in the  literature of approximation theory and variational analysis \citep[see, e.g.,][]{aze2003survey,corvellec2008nonlinear,COV:8137408,Fabian2010}. 
Much of this body of work can be applied to study convex functions such as $L(\theta)$, where one can make use of the basic properties of convex functions to prove lower bounds  on $L(\theta)-L^*$ in terms of the distance between $\theta$ and $\Theta_c$ \citep[see, e.g., Thm. 1.16 in][]{aze2003survey}. While these results are useful to further study the class of functions that satisfy Assumption ~\ref{asm:eps.opt}, providing a direct link between these results and the error bounds of Assumption~\ref{asm:eps.opt} is outside the scope of this paper. 

Assumptions~\ref{asm:capacity}-\ref{asm:eps.opt} can not be applied directly when  $f_c \notin \Hyp$. When $f_c \notin \Hyp$, we make use of the following generalized version of  these assumptions.  We  first consider a relaxed  version of Assumption~\ref{asm:capacity}, which assumes that $f_c$ can be approximated  by some  $h\in\Hyp$.
\begin{asm}[$\upsilon$-approachability of $f_c$ by $\Hyp$]
\label{asm:approach.eps}
Let $\upsilon$ be a positive scalar. Define the distance between the function class $\Hyp$ and $f_c$ as ${\rm dist}(f_c,\Hyp):=\inf_{h\in\Hyp}\E[ |h(x;\theta)-f_c(x)| ],$ where the expectation is taken w.r.t. the distribution $\rho$. 
We  then assume that the following inequality holds:
${\rm dist}(f_c,\Hyp)\leq\upsilon$.
\end{asm}

The next assumption generalizes Assumption \ref{asm:eps.opt}b to the case where $f_c\notin\Hyp$:

\begin{asm}
\label{asm:Holder.eps}
Let $\wt p$ be a positive scalar. Assume that there exists a  class of convex functions $\wt \Hyp\subseteq \C(\X,\R)$  parametrized by $\theta\in \wt \Theta\subset \B(\R^ {\wt p})$ such that: (a) $f_c \in\wt \Hyp$, (b) every $h\in\wt \Hyp$ is linear 
in $\theta$ and (c) $\Hyp\subseteq\wt \Hyp$.  Let $\Theta_c \subseteq \wt \Theta$ be the set of parameters for which $h(x;\theta)=f_c(x)$ for every $x\in \X$ and $\theta\in \Theta_c$.  Also define $\wt \Theta_e:=\{\theta|\theta\in \wt \Theta,\E( h(x;\theta) )=f_c(x)\}$. We assume that there exists some finite positive scalars   $\gamma_2$ and $\beta_2$ such that for every $x\in\X$ and $\theta\in\wt \Theta_e$
 
$$L(\theta) - L^*  \geq  \gamma_2  d_2(\theta, \wt \Theta_c)^{1/\beta_2}.$$

\end{asm}

Intuitively speaking, Assumption~\ref{asm:Holder.eps} implies that the function class $\Hyp$ is a subset of a larger unknown function class $\wt \Hyp$ which satisfies the global error bound of Assumption~\ref{asm:eps.opt}b. Note that we do not require access to the class $\wt \Hyp$, but we need that such a function class exists.

\subsection{Performance Guarantees}
We now present the two main theoretical results of our work and provide sketches of their proofs (the complete proofs of our results is provided in the Supp. Material).
\subsubsection{Exact Setting}
Our first result considers the case where the convex envelope $f_c \in \mathcal{H}$. In this case, we can guarantee that as the number of function evaluations grows, the solution of Alg.\ \ref{algo:CORR} converges to the optimal solution with a polynomial rate.
\begin{theorem}
\label{thm:f.bound.exact}
Let  $\delta$ be a positive scalar. Let Assumptions~\ref{asm:convex}, \ref{asm:smoothness} ,\ref{asm:capacity}, and \ref{asm:eps.opt} hold.  Then  Alg.\ \ref{algo:CORR}  returns $\wh x$ such that with probability $1-\delta$ 
\begin{equation*}
f(\wh x)-f^*= \O\left[\xi_s\left(\frac{\log(1/\delta)}{ T }\right)^{\beta_1\beta_2/2}\right],
\end{equation*}
where the smoothness coefficient $ \xi_s :=(\frac1{\gamma_1})^{\beta_2} (\frac1{\gamma_2})^{\beta_1\beta_2}U^{(1+\beta_2)\beta_1}(RB)^{\beta_2\beta_1}.$
\end{theorem}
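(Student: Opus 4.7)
To prove the theorem I would chain a sequence of error bounds from the empirical regression in \texttt{OPT} back to the suboptimality gap $f(\wh x)-f^*$, where each transfer pays an exponent that composes into $\beta_1\beta_2/2$. First, I would freeze $\mu=\mu_c:=\E[f_c(x)]$ inside the outer loop. For this choice Lem.~\ref{lem:const.opt} together with Assumption~\ref{asm:capacity} guarantees that the population problem in Eqn.~\ref{eq:opt.const1} is solved exactly by any $\theta_c\in\Theta_c$, so $\wh\theta_{\mu_c}$ is an empirical minimizer of a loss that is convex (Assumption~\ref{asm:convex}b) and $U$-Lipschitz in $\theta$ (Assumption~\ref{asm:smoothness}). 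A standard high-probability bound for stochastic convex optimization of Lipschitz losses, e.g.\ Thm.~1 of \citet{shalev2009stochastic}, then gives $L(\wh\theta_{\mu_c})-L^*\le c_1\, UBR\sqrt{\log(1/\delta)/T}$ with probability at least $1-\delta/2$, where $B$ bounds the parameter norm; a parallel Hoeffding step on $|\wh\E_2[h]-\E[h]|$ controls the empirical-vs-population feasibility of the equality constraint so that Assumption~\ref{asm:eps.opt}b can then be applied.

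I would then chain five conversions. (i)~Assumption~\ref{asm:eps.opt}b turns the loss gap into the parameter-distance bound $d_2(\wh\theta_{\mu_c},\Theta_c)\le((L-L^*)/\gamma_2)^{\beta_2}$. (ii)~The $U$-Lipschitzness of $h$ in $\theta$ together with Assumption~\ref{asm:capacity} ($f_c\in\Hyp$) upgrades this into the uniform bound $|h(x;\wh\theta_{\mu_c})-f_c(x)|\le U\, d_2(\wh\theta_{\mu_c},\Theta_c)$ for every $x\in\X$. (iii)~Since $\wh x_{\mu_c}=\arg\min_x h(x;\wh\theta_{\mu_c})$, the standard near-minimizer argument for uniformly approximated functions gives $f_c(\wh x_{\mu_c})-f_c^*\le 2U\, d_2(\wh\theta_{\mu_c},\Theta_c)$. (iv)~Applying Prop.~\ref{prop:convexenv} and Assumption~\ref{asm:convex}c (through the supplementary Lem.~\ref{lem:opt.convex}) to equate $f_c^*=f^*$ and $\X^*_{f_c}=\X^*_f$, the lower H\"older bound of Assumption~\ref{asm:eps.opt}a converts this $f_c$-value gap into a distance bound $d(\wh x_{\mu_c},\X^*_f)\le(2U d_2/\gamma_1)^{\beta_1}$. (v)~Lipschitz continuity of $f$ (Assumption~\ref{asm:smoothness}) then bounds $f(\wh x_{\mu_c})-f^*$ by this same distance, and the outer search over $\mu$ (Step~2 of Alg.~\ref{algo:CORR}) only improves the answer, so $f(\wh x_{\wh\mu})\le f(\wh x_{\mu_c})$. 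Substituting the composed exponents ($1/2$ from stochastic optimization, $\beta_2$ from Assumption~\ref{asm:eps.opt}b, $\beta_1$ from Assumption~\ref{asm:eps.opt}a) yields the claimed rate $(\log(1/\delta)/T)^{\beta_1\beta_2/2}$, while collecting the accumulated factors of $U$, $\gamma_1^{-1}$, $\gamma_2^{-1}$, $B$, and $R$ reproduces the smoothness coefficient $\xi_s$.

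The main obstacle is step~(iv): Assumption~\ref{asm:eps.opt}a constrains $f$, whereas the previous steps only control how well $\wh x_{\mu_c}$ minimizes $f_c$ in value. The passage works precisely because Assumption~\ref{asm:convex}c forces $\X^*_{f_c}=\X^*_f$, so the H\"older lower bound on $f$ can be evaluated at $\wh x_{\mu_c}$ using $f(\wh x_{\mu_c})\ge f_c(\wh x_{\mu_c})$ after a short argument, based on the convex-envelope structure, that $\wh x_{\mu_c}$ itself lies close to the common minimizing set. A secondary subtlety is that Assumption~\ref{asm:eps.opt}b is stated for exact population feasibility ($\theta\in\Theta^e$), whereas $\wh\theta_{\mu_c}$ is only empirically feasible; this is why the Hoeffding step of the first paragraph is needed, and the final $\log(1/\delta)$ dependence arises from a union bound over this concentration event and the stochastic-optimization event.
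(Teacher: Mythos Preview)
Your outline matches the paper's proof: both chain the stochastic-optimization excess-risk bound through Assumption~\ref{asm:eps.opt}b, $U$-Lipschitzness in $\theta$, the near-minimizer inequality $f_c(\wh x_\mu)-f^*\le 2U\,d_2(\wh\theta_\mu,\Theta_c)$, Assumption~\ref{asm:eps.opt}a, and Lipschitzness of $f$, and both finish by observing that the outer search over $\mu$ can only improve the bound. The one technical difference is that the paper anchors at $\mu=\wh\E_2[f_c(x)]$ rather than your $\mu_c=\E[f_c(x)]$, so that $\theta_c$ is feasible for the \emph{empirical} constraint and the stochastic-optimization bound compares directly against $L(\theta_c)$; the population-feasibility mismatch you flag for Assumption~\ref{asm:eps.opt}b is then resolved by an explicit $\ell_2$ projection $\wh\theta^{\rm proj}_\mu$ of $\wh\theta_\mu$ onto the affine set $\{\theta:\E[h(x;\theta)]=\E[f_c(x)]\}$, with the projection distance computed in closed form via a Lagrangian (Lemmas~\ref{lem:thdiff.tail}--\ref{lem:Lwt.tail}), which is exactly the ``Hoeffding step'' you allude to but leave implicit.
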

{\em Sketch of proof.} To prove this result, we first prove bound on the error  $L(\wh \theta)-\min_{\theta\in\Theta_e} L(\theta)$  for which we rely on standard results from stochastic convex optimization.  This combined with the result of  Lem.~\ref{lem:const.opt}  leads to a bound on  $L(\wh \theta)-L^*$.  The bound on $L(\wh \theta)-L^*$  combined  with Assumption \ref{asm:eps.opt} translates to a bound on $d(\wh x,\X^*_f)$.  The result then follows by applying the Lipschitz continuity assumption (Assumption \ref{asm:smoothness}). \QEDA

Thm.~\ref{thm:f.bound.exact}  guarantees that as the number of function evaluations $T$ grows, the solution of CoRR converges to $f^*$ with a polynomial rate. The order of polynomial depends on the constants $\beta_1$  and $\beta_2$.  The following corollary, which is an immediate result of Thm.~\ref{thm:f.bound.exact}, quantifies the number of function evaluations $T$ needed to achieve an $\eps$-optimal solution. 

 
\begin{corollary}
\label{thm:f.bound.exact}
Let Assumptions~\ref{asm:convex}, \ref{asm:smoothness}, \ref{asm:capacity}, and \ref{asm:eps.opt} hold.  Let $\eps$ and $\delta$ be some positive scalars. Then  Alg.\ \ref{algo:CORR}  needs  $T=(\frac{\xi_s}\eps)^{2/(\beta_1\beta_2)}\log(1/\delta)$ function evaluations to  return $\wh x$ such that  with probability $1-\delta$, $f(\wh x)-f^*\leq \eps$.
\end{corollary}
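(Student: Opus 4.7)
The plan is to derive the corollary by a direct algebraic inversion of the bound already established in Theorem \ref{thm:f.bound.exact}. Since Theorem \ref{thm:f.bound.exact} already guarantees, with probability at least $1-\delta$, that
\[
f(\wh x) - f^* \;\le\; C\,\xi_s\left(\frac{\log(1/\delta)}{T}\right)^{\beta_1\beta_2/2}
\]
for some absolute constant $C$ absorbed in the $\O(\cdot)$ notation, the task reduces to choosing $T$ large enough that the right-hand side falls below $\varepsilon$. This is the only nontrivial step, and it is purely arithmetic rather than conceptual.

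First I would set the stated right-hand side to $\varepsilon$, namely
\[
\xi_s\left(\frac{\log(1/\delta)}{T}\right)^{\beta_1\beta_2/2} \;=\; \varepsilon,
\]
raise both sides to the power $2/(\beta_1\beta_2)$ to linearize the exponent, and solve for $T$. This yields
\[
\frac{\log(1/\delta)}{T} \;=\; \Big(\frac{\varepsilon}{\xi_s}\Big)^{2/(\beta_1\beta_2)}
\quad\Longrightarrow\quad
T \;=\; \Big(\frac{\xi_s}{\varepsilon}\Big)^{2/(\beta_1\beta_2)}\log(1/\delta),
\]
matching the expression in the corollary statement. Then I would observe that, since the bound in Theorem \ref{thm:f.bound.exact} is monotonically decreasing in $T$, any $T$ at least this large yields $f(\wh x)-f^*\le \varepsilon$ on the same high-probability event, so the probability statement from Theorem \ref{thm:f.bound.exact} transfers verbatim.

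The only potential obstacle is cosmetic: the hidden multiplicative constant from the $\O(\cdot)$ in Theorem \ref{thm:f.bound.exact} gets absorbed into the definition of $\xi_s$ (or, equivalently, into a constant prefactor on $T$ that is suppressed by the corollary's implicit use of the same asymptotic convention). I would flag this explicitly and note that the assumptions invoked, Assumptions \ref{asm:convex}, \ref{asm:smoothness}, \ref{asm:capacity}, and \ref{asm:eps.opt}, are exactly those of Theorem \ref{thm:f.bound.exact}, so no additional conditions are needed. No further estimates, union bounds, or concentration arguments are required beyond those already performed in the parent theorem.
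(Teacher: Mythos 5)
Your proposal is correct and takes essentially the same route as the paper, which offers no separate argument but presents the corollary as an ``immediate result'' of the theorem---precisely the algebraic inversion you perform, solving $\xi_s(\log(1/\delta)/T)^{\beta_1\beta_2/2}=\eps$ for $T$ under the same assumptions and on the same high-probability event. Your explicit flagging of the hidden $\O(\cdot)$ constant being absorbed into $\xi_s$ (or a suppressed prefactor on $T$) is a fair and accurate reading of the paper's convention.
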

This result implies that one can achieve an $\eps$-accurate approximation of the global optimizer with CoRR with a polynomial number of function evaluations.


\subsubsection{Approximate Setting}
Thm.\ \ref{thm:f.bound.exact} relies on the assumption that the convex envelope $f_c$  lies in the function class $\Hyp$. However, in general, there is no guarantee that $f_c$ belongs to $\Hyp$. When the convex envelope $f_c \notin \Hyp$, the result of  Thm.~\ref{thm:f.bound.exact} cannot be applied. However, one may expect that Alg.~\ref{algo:CORR} still may find  a close approximation of the global minimum as long as the distance between $f_c$ and $\Hyp$ is small. To prove that CoRR finds a near optimal solution in this case, we must show that $f(\wh x)- f^*$ remains small when the distance between $f_c$ and $\Hyp$ is small. We now generalize Thm.\ \ref{thm:f.bound.exact} to the setting where the convex envelope $f_c$ does not lie in $\Hyp$ but is close to it.
\begin{theorem}
\label{thm:f.bound.approx}
Let Assumptions \ref{asm:convex}, \ref{asm:smoothness}, \ref{asm:approach.eps},  and \ref{asm:Holder.eps} hold.  Then Alg.~\ref{algo:CORR}  returns $\wh x$ such that for every $\zeta>0$ with probability (w.p.) $1-\delta$
\begin{equation*}
f(\wh x)-f^*=   \O\left[\xi_s\left(\sqrt{\frac{\log(1/\delta)}{ T }} +\zeta+\upsilon\right)^{\beta_1\beta_2}\right].
\end{equation*}
\end{theorem}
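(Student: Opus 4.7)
The strategy mirrors the four-step pipeline that proves Theorem~\ref{thm:f.bound.exact}---(i) concentration for the empirical constrained regression, (ii) reduction to an excess loss bound $L(\wh\theta_{\wh\mu}) - L^*$, (iii) H\"older conversion to a parameter-distance bound, and (iv) Lipschitz transfer to a bound on $f(\wh x) - f^*$---while tracking two new sources of slack: the approximation error $\upsilon$ coming from $f_c \notin \Hyp$ (Assumption~\ref{asm:approach.eps}) and a user-controlled tolerance $\zeta$ that arises from the fact that Assumption~\ref{asm:Holder.eps} only holds on $\wt\Theta_e$.

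For step (i), I would establish uniform deviation bounds of order $\O(\sqrt{\log(1/\delta)/T})$ for both the empirical loss $\wh\E_1[|h(x;\theta)-f(x)|]$ and the empirical constraint $\wh\E_2[h(x;\theta)]$ over $h(\cdot;\theta)\in\Hyp$, via standard Rademacher/uniform-convergence arguments for linear classes with bounded Lipschitz loss (the same tools as in Theorem~\ref{thm:f.bound.exact}, and appealing, as the proof sketch there does, to results such as Thm.~1 of Shalev-Shwartz et al.\ on stochastic convex optimization). For step (ii), let $h^\star \in \Hyp$ be a $\upsilon$-approximant to $f_c$ guaranteed by Assumption~\ref{asm:approach.eps}. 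Since $h^\star$ is within $\upsilon$ of $f_c$ both in $L^1$ and in expectation, the empirical optimizer $\wh\theta_{\wh\mu}$ competes with $h^\star$ up to the concentration error from step~(i), which gives
\begin{equation*}
L(\wh\theta_{\wh\mu}) - L^* \;=\; \O\!\left(\sqrt{\log(1/\delta)/T} + \upsilon\right),
\end{equation*}
where $L^* = \min_{\theta\in\wt\Theta_e} L(\theta)$ as in Assumption~\ref{asm:Holder.eps}.

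For step (iii), the subtlety is that Assumption~\ref{asm:Holder.eps} is only stated on $\wt\Theta_e$, whereas $\wh\theta_{\wh\mu}$ need not be exactly feasible in the population sense. I would remove this discrepancy by perturbing $\wh\theta_{\wh\mu}$ by at most $\zeta$ in parameter space, using linearity of $h$ in $\theta$ (Assumption~\ref{asm:convex}b) to enforce $\E[h(x;\theta)] = \E[f_c(x)]$ at a loss cost of order $U\zeta$. Applying Assumption~\ref{asm:Holder.eps} to the perturbed parameter and unwinding via the triangle inequality then yields $d_2(\wh\theta_{\wh\mu}, \wt\Theta_c) = \O\bigl((\sqrt{\log(1/\delta)/T}+\upsilon+\zeta)^{\beta_2}\bigr)$. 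Step (iv) proceeds as in Theorem~\ref{thm:f.bound.exact}: the $U$-Lipschitz continuity of $h$ in $\theta$ (Assumption~\ref{asm:smoothness}) together with $h(\cdot;\theta_c)=f_c$ for any $\theta_c \in \wt\Theta_c$ gives a uniform bound on $|h(\cdot;\wh\theta_{\wh\mu})-f_c|$; Proposition~\ref{prop:convexenv} together with Assumption~\ref{asm:eps.opt}a (retained implicitly, since $\beta_1$ and $\gamma_1$ appear in $\xi_s$) then converts the resulting bound on $f_c(\wh x)-f^*$ into a bound on $d(\wh x,\X^*_f)^{1/\beta_1}$; and a final Lipschitz bound on $f$ produces the $\beta_1\beta_2$-th power and the polynomial smoothness factor $\xi_s$.

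The main obstacle is step (iii): constructing the $\zeta$-perturbation explicitly and bounding its cost. Unlike the exact setting of Theorem~\ref{thm:f.bound.exact}, where the candidate class already contains $f_c$ and the population constraint of Lemma~\ref{lem:const.opt} pins down the correct level set, here one must move $\wh\theta_{\wh\mu}$ into $\wt\Theta_e \subseteq \wt\Theta$ while controlling both boundedness and the loss increment using only linearity of $h$ in $\theta$ and the existence of at least one feasible direction in $\wt\Theta$. This is precisely why the theorem is stated with a free parameter $\zeta>0$---it encodes the price paid for projecting onto $\wt\Theta_e$, and one is allowed to trade it off freely against the statistical rate.
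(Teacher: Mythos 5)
Your pipeline is structurally the same as the paper's (a good-$\mu$ existence argument via an approximant, stochastic-convex-optimization concentration, restoration of population feasibility, H\"older conversion via Assumption~\ref{asm:Holder.eps}, Lipschitz transfer, and finally $f(\wh x_{\wh\mu})\le f(\wh x_\mu)$ from the $\mu$-search), but your accounting of where $\zeta$ comes from is wrong, and it opens a genuine gap. Assumption~\ref{asm:approach.eps} only bounds an \emph{infimum}: ${\rm dist}(f_c,\Hyp)=\inf_{h\in\Hyp}\E[|h(x;\theta)-f_c(x)|]\leq\upsilon$, and this infimum need not be attained; your step (ii) begins by positing an exact $\upsilon$-approximant $h^\star\in\Hyp$, which is not guaranteed to exist. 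The paper's proof instead fixes, for each $\zeta>0$, some $\theta^{\zeta}\in\Theta$ with $\E[|h(x;\theta^{\zeta})-f_c(x)|]\leq\upsilon+\zeta$ --- this non-attainment slack is the \emph{sole} source of the free parameter $\zeta$ in the theorem statement --- and then runs the entire argument with the constraint level $\mu=\wh\E_2[h(x;\theta^{\zeta})]$ (Lem.~\ref{lem:Lalp.tail.approx}). When the infimum is not attained, your argument as written has no starting point, and this is exactly the case $\zeta$ is designed to cover.

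Relatedly, your step (iii) cannot work as described: the perturbation needed to move $\wh\theta_{\wh\mu}$ onto the population constraint set $\{\theta:\E[h(x;\theta)]=\E[f_c(x)]\}$ is not a quantity you may freely take to be ``at most $\zeta$''; its size is dictated by the actual constraint violation $|\E[h(x;\wh\theta_{\wh\mu})]-\E[f_c(x)]|$, which is of order $R\sqrt{\log(1/\delta)/T}+\upsilon+\zeta$ (Hoeffding on the empirical constraint plus the $L^1$ error of $\theta^{\zeta}$). The paper handles this with an explicit $\ell_2$ projection of the lift $\wt\theta_\mu$ onto the subspace $\Theta^{\zeta}_{\text{sub}}$, solved in closed form via a Lagrangian, yielding $\|\wt\theta^{\rm proj}_\mu-\wt\theta_\mu\|\leq\bigl(R\sqrt{2\log(4/\delta)/T}+\upsilon+\zeta\bigr)/\|\E[\phi(x)]\|$ (Lem.~\ref{lem:thdiff.tail.approx}); the induced change in $L$ is then controlled by Cauchy--Schwarz, which cancels the $\|\E[\phi(x)]\|$ factor (Lem.~\ref{lem:Lwt.tail.approx}), and the projected point lies in $\wt\Theta_e$ by construction so Assumption~\ref{asm:Holder.eps} applies (Lem.~\ref{lem:x.bound.approx}). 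If you replace your $h^\star$ by $\theta^{\zeta}$ and your ``$\zeta$-perturbation'' by this violation-sized projection, your steps (i)--(iv) line up with Lemmas~\ref{lem:Lalp.tail.approx}--\ref{lem:x.bound.approx} and the proof closes; your observation that the function-side H\"older bound (Assumption~\ref{asm:eps.opt}a, applied to $f_c$) must be retained despite not being listed in the theorem is correct and matches what the paper implicitly does.
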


{\em Sketch of proof.}   To prove this result, we rely on standard results from stochastic convex optimization to first prove a bound on the error $L(\wh \theta)-\min_{\theta\in\Theta^e}L(\theta)$ when we set $\mu$ the empirical mean of the convex envelope. We then make use of Assumption \ref{asm:approach.eps} as well as Lem.~\ref{lem:const.opt} to transform this bound to a bound on $L(\wh \theta)-L^*$.  The bound on $f(\wh x)-f^*$ then follows by combining this result with Assumptions \ref{asm:smoothness} and  \ref{asm:Holder.eps}. \QEDA

 \subsubsection{Approximation Error $\upsilon$ vs. Complexity of $\Hyp$}
\label{sec:approx.nu}
 From function approximation theory, it is known that for a sufficiently smooth function $g$ one can  achieve an 
$\upsilon$-accurate  approximation of $g$ by a linear combination of $p=\O(n/\upsilon)$ bases \citep{mhaskar1996neural,girosi1992convergence}.  Similar \emph{shape preserving} results have been established for the case 
when the function and bases are both convex \citep[see, e.g.,][]{wang2012shape,gal2010shape,shvedov1981orders} under some mild assumptions (e.g., Lipschitz continuity) on $g$.  This implies that to achieve an approximation error of  $\upsilon$ in Thm.~\ref{thm:f.bound.approx},  the function class $\Hyp$ needs  to consist of  $p=\O(n/\upsilon)$ convex bases. Several methods for constructing convex bases that employ the use of polynomial tensors or the kernel methods, are provided in \citet{wang2012shape,gal2010shape}.  Thus to decrease  the approximation error $\upsilon$ one needs to increase the complexity of function class $\Hyp$, i.e., the number of convex bases $p$. 

 \subsubsection{Dependence on Dimension}
 \label{sec:dimension}
 The results of Thm.~\ref{thm:f.bound.exact} and Thm.~\ref{thm:f.bound.approx} have no explicit dependence on the dimension $n$. However, the Lipschitz constant $U$ can, in the worst-case scenario, be of $\O(\sqrt{p})$ (due to the  Cauchy-Schwarz inequality).  On the other hand to achieve an approximation error of $\upsilon$  the number of bases $p$  needs be of $\O(n/\upsilon)$ (see Sect.~\ref{sec:approx.nu}).  When we plug  this result in the bound of Thm.~\ref{thm:f.bound.approx}, this leads to a dependency of $\O(n^{(1+\beta_2)\beta_1/2})$ on the dimension $n$ due to the Lipschitz constant $U$.  In the special case where $\beta_2=\beta_1=1$, i.e., when the error bounds of Assumption \ref{asm:eps.opt} are linear, the dependency on $n$ is also linear. The linear dependency on $n$ in this case matches the results in the literature of black-box (zero-order)  optimization theory \citep[see, e.g.,][]{duchi2015optimal} for the convex case.


\section{Numerical Results}

\label{sec:numeric}
In this section, we evaluate the performance of CoRR on several multi-dimensional test functions used for benchmarking non-convex optimization methods \citep{jamil2013literature}. 

{\bf Evaluation setup.}~~ Here we study CoRR's effectiveness in finding the global minimizer of the following test functions (Fig.\ \ref{fig:phasetrans}a). We assume that all functions are supported over $\X = \mathcal{B}(0,2) \subseteq \R^n$, and otherwise rescale them to lie within this set. {\bf (S1)} Salomon function: $f_S(x) = 1 -\cos(2\pi \|x\|)+ 0.5\|x\|$. {\bf (S2)} Squared Salomon: $f_{S_2}(x) = 0.1f_S(x)^2$. {\bf (SL)} Salomon and Langerman combination:  $f_{SL}(x) = f_S(x) + f_L(x)~ \forall x \in \mathcal{B}(0,10) \cap \mathcal{B}(0,0.2)$ and $f_{SL}(x) = 0$, otherwise (before rescaling the domain). {\bf (L)} Langerman function: $f_L(x) = -\exp(\| x - \alpha \|_2^2/\pi ) \cos( \pi \| x - \alpha \|_2^2 ) +1,~ \forall x \in \mathcal{B}(0,5)$ (before rescaling the domain). {\bf (G)} The Griewank function: $f_G(x) = 0.1 \big[ 1 + \frac{1}{4000} \sum_{i=1}^N x(i)^2 - \prod_{i=1}^N {\frac{\cos(x)}{\sqrt{i}}} \big], ~\forall x \in \mathcal{B}(0,200)$ (before rescaling the domain). All of these functions have their minimum at the origin, except for the Langerman function which has its minimum at $x^{\ast} = c\mathbbm{1}$ for $c = 0.5$. 

All of the aforementioned functions exhibit some amount of global structure for which the convex envelope can be approximated by a quadratic basis (Fig.\ \ref{fig:phasetrans}a). We thus use a quadratic basis to construct our function class $\Hyp$. The basis functions $h(x;\theta) \in \Hyp$ are parameterized by a vector of coefficients $\theta = [ \theta_1, \theta_2, \theta_3 ]$, and can be written as $h(x;\theta) = \langle \theta_1, x^2 \rangle + \langle \theta_2 , x\rangle + \theta_3$. Thus, the number of parameters that we must estimate to find a convex approximation $h$ equals $2n + 1$. In practice, we impose a non-negativity constraint on all entries of the vector $\theta_1$ to ensure that our approximation is convex.

\begin{figure*}[t!]
\begin{center}
\includegraphics[width=0.9\textwidth]{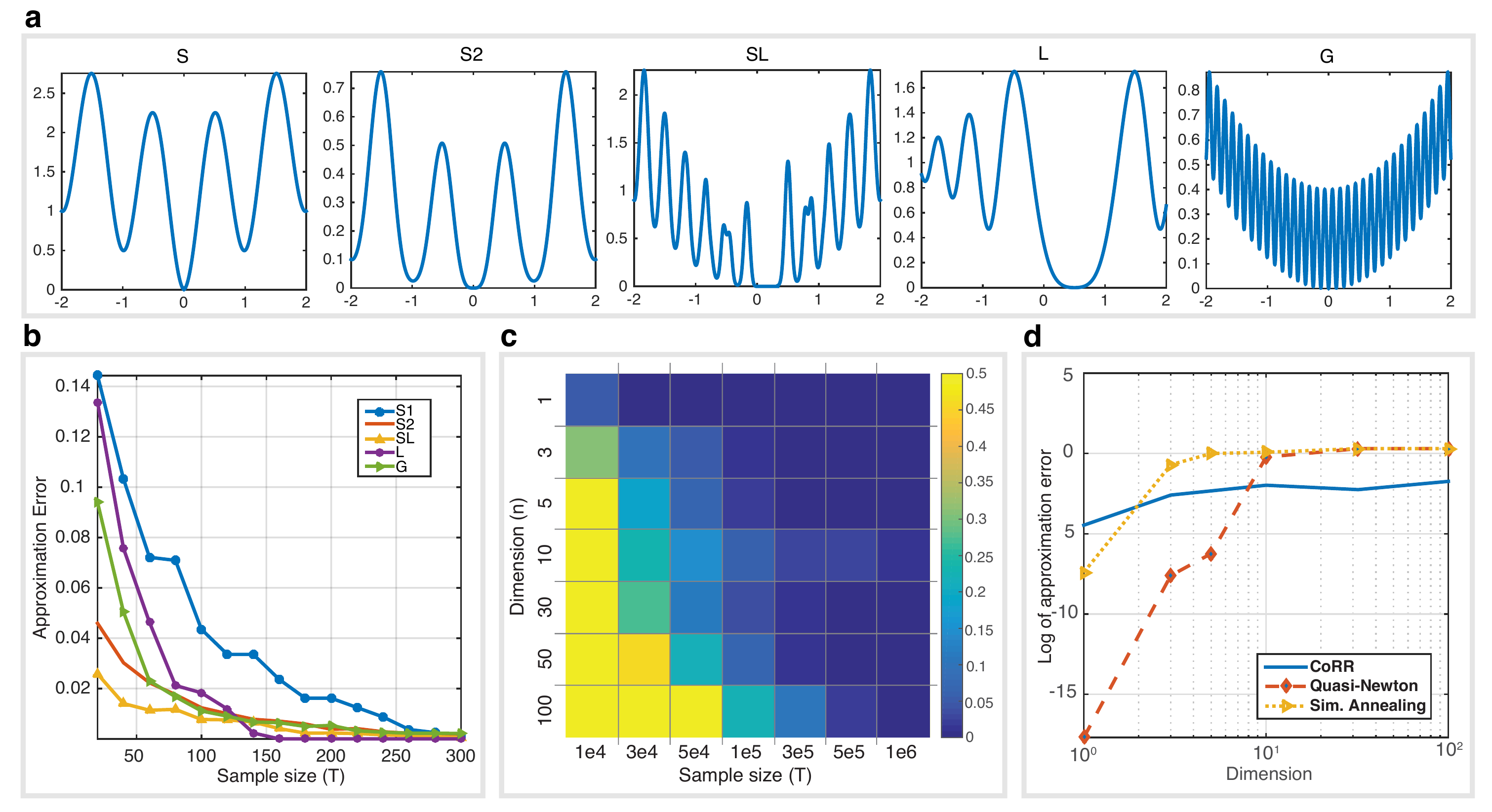}
\end{center}
\caption{{\em Scaling behavior and performance of CoRR.} Along the top row in (a), we plot all five test functions studied in this paper. In (b), we display the mean approximation error between $f(\wh x) - f^*$ as a function of the number of function evaluations $T$ for all test functions in 1D. In (c), we display the mean approximation error as a function of the dimension and number of samples for the Salomon function. In (d), we compare CoRR's approximation error with other approaches for non-convex optimization, as we vary the dimension. \label{fig:phasetrans}}
\end{figure*}

{\bf Summary of results.}~~
To understand the inherent difficulty of finding global minimizers for the test functions above, we compute the error $f(\wh x) - f^{\ast}$ as we increase the number of function evaluations of $f$. Here, we show each of our five test functions (Fig.\ \ref{fig:phasetrans}a) and their average scaling behavior in one dimension (Fig.\ \ref{fig:phasetrans}b), where the error is averaged over $100$ trials. We observe that CoRR quickly converges for all five test functions, with varying convergence rates. We observe the smallest initial error (for only $20$ samples) for $f_{SL}$ and the highest error for $f_S$. In addition, $f_L$ achieves nearly perfect reconstruction of the global minimum after only $200$ samples. The good scaling properties of $f_L$ and $f_{SL}$ is likely due the the fact that both of these functions have a wide basin around their global minimizer. This result provides nice insight into the scaling of CoRR in low dimensions.

Next, we study the approximation error as we vary the sample size and dimension for the Salomon function $f_S$ (Fig.\ \ref{fig:phasetrans}c-d). Just as our theory suggests, there is a clear dependence between the dimension and number of samples required to obtain small error. In Fig.\ \ref{fig:phasetrans}c, we display the scaling behavior of CoRR as a function of both dimension and number of function evaluations $T$. In all of the tested dimensions, we obtain an error smaller than $1e^{-5}$ when we draw one million samples. In Fig.\ \ref{fig:phasetrans}d, we compare the performance of CoRR (for fixed number of evaluations $T$) as we vary the dimension. In contrast, the quasi-Newton (QN) method and hybrid simulated annealing (SA) method \citep{hedar2004heuristic} recover the global minimizer for low dimensions but fail in dimensions greater than ten.\footnote{These methods are selected from a long list of candidates in MATLAB's global optimization toolbox. We report results for the methods that gave the best results for the test functions studied here.} We posit that this is due to the fact the minimizer of the Salomon function lies at the center of its domain and as the dimension of the problem grows, drawing an initialization point (for QN) that is close to the global minimizer becomes extremely difficult.

\section{Discussion and Future Work}

This paper introduced CoRR, an approach for learning a convex relaxation for a wide class of non-convex functions. The idea behind CoRR is to find an empirical estimate of the convex envelope of a function from a set of function evaluations. We demonstrate that CoRR is an efficient strategy for global optimization, both in theory and in practice. In particular, we provide theoretical results (Sec.\ \ref{sec:theory}) which show that CoRR is guaranteed to produce a convergent estimate of the convex envelope that  exhibits polynomial dependence on the dimension. In numerical experiments (Sec.\ \ref{sec:numeric}), we showed that CoRR provides accurate approximations to the global minimizer of multiple test functions and appears to scale well with dimension.


Our current instantiation of CoRR finds a convex surrogate for $f$ based upon a set of samples that are drawn at random at the onset of the algorithm. In our evaluations, we draw i.i.d.\ samples from a uniform distribution over $\X$. However, the choice of the sampling distribution $\rho$ has a significant impact on our estimation procedure. As such, selecting samples in an intelligent manner would significantly reduce the number of samples required to obtain an accurate estimate. A natural extension of CoRR is to the case where we can iteratively refine our distribution $\rho$ based upon the output of the algorithm at previous steps. 


An important factor in the success of our algorithm is the basis that we use to form our approximation. As discussed in Sec.\ \ref{sec:approx.nu}, we know that a polynomial basis can be used to form a convex approximation to any convex function \citep{gal2010shape}. However, finding a concise representation of the convex envelope using high-degree polynomials is not an easy task. Thus finding other well-suited bases  for this approximation, such as the exponential basis, may improve the efficiency of CoRR by reducing the number of bases required. While outside the scope of this paper, exploring the use of constrained dictionary learning methods \citep{yaghoobi2009dictionary} for finding a good basis for our fitting procedure, is an interesting line for future work.

In our experiments, we observe that CoRR typically provides a good approximation to the global minimizer. However, in most cases, we do not obtain machine precision (like QN for low dimensions). Thus, we can combine CoRR with a local search method like QN by using the solution of CoRR as an initialization point for the local search. When using this hybrid approach, we obtain perfect reconstruction of the global minimum for the Salomon function for all of the dimensions we tested (Fig.\ \ref{fig:phasetrans}d). This suggests that, as long the function does not fluctuate too rapidly around its global minimum (Asm.\ \ref{asm:smoothness}), CoRR can be coupled with other local search methods to quickly converge to the absolute global minimizer.

The key innovation behind CoRR is that one can efficiently approximate the convex envelope of a non-convex function by solving a constrained regression problem which balances the approximation error with a constraint on the empirical expectation of the estimated convex surrogate. While our method could be improved by using a smart and adaptive sampling strategy, this paper provides a new way of thinking about how to relax non-convex problems. As such, our approach opens up the possibility of using the myriad of existing tools and solvers for convex optimization problems to efficiently solve non-convex problems. 



\bibliographystyle{apalike}
\bibliography{refs}

\appendix
\section{Proofs}
\label{sec:anal}

The following result strengthens Proposition~\ref{prop:convexenv} and provides a sufficient condition under which $f$ and its convex envelope $f_c$ have the same set of minimizers. This result implies that one can minimize the function $f$ by minimizing its convex envelope $f_c$, under the assumption that the set of minimizer of $f$, $\X^*_f$, is a convex set.

\begin{lemma}
\label{lem:opt.convex}
Let $f_c$ be the convex envelope of $f$  on $\X$. Let $\X^*_{f_c}$ be the set of minimizers of $f_c$.  Assume that $\X^*_f$ is a convex set. Then $\X^*_{f_c} = \X^*_f$.
\end{lemma}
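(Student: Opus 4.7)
My plan is to prove the two inclusions $\X^*_f \subseteq \X^*_{f_c}$ and $\X^*_{f_c} \subseteq \X^*_f$ separately. The first inclusion is already given by Proposition~\ref{prop:convexenv}, so the real work is in showing the reverse inclusion $\X^*_{f_c} \subseteq \X^*_f$, which is where the convexity hypothesis on $\X^*_f$ must be used.

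For the reverse inclusion, I would take an arbitrary $x^* \in \X^*_{f_c}$, so that $f_c(x^*) = \min_{x\in\X} f_c(x) = f^*$ (using Proposition~\ref{prop:convexenv}(a)). I would then invoke the characterization $f_c(x^*) = \inf\{\xi : (\xi, x^*) \in {\rm conv}({\rm epi}\,f)\}$ together with Carath\'eodory's theorem to express $(f_c(x^*), x^*)$ as a finite convex combination
\begin{equation*}
(f^*, x^*) \;=\; \sum_{i=1}^{k} \lambda_i (\xi_i, x_i), \qquad \lambda_i \geq 0, \; \sum_i \lambda_i = 1, \; \xi_i \geq f(x_i), \; x_i \in \X.
\end{equation*}
(Strictly speaking one gets this up to an arbitrarily small $\epsilon > 0$, or exactly if the epigraph is closed; I would handle the general case by taking an approximating sequence and passing to the limit, but the clean case is enough to convey the idea.)

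The crucial chain of inequalities is then
\begin{equation*}
f^* \;=\; \sum_i \lambda_i \xi_i \;\geq\; \sum_i \lambda_i f(x_i) \;\geq\; \sum_i \lambda_i f^* \;=\; f^*.
\end{equation*}
Equality must therefore hold throughout, which forces $f(x_i) = f^*$ for every index $i$ with $\lambda_i > 0$, i.e., $x_i \in \X^*_f$ for each such $i$. At this point the hypothesis that $\X^*_f$ is convex enters: since $x^*$ is a convex combination of points in the convex set $\X^*_f$, we conclude $x^* \in \X^*_f$. This yields $\X^*_{f_c} \subseteq \X^*_f$ and completes the proof.

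The main technical obstacle is the Carath\'eodory step: the representation of a point of ${\rm conv}({\rm epi}\,f)$ as an honest finite convex combination of points of ${\rm epi}\,f$ achieving the infimum $f_c(x^*)$ requires either closedness of the epigraph (e.g., lower semicontinuity of $f$, which follows from the Lipschitz assumption in the settings we care about) or a limiting argument through an $\epsilon$-approximation. Once that representation is in hand, the rest is the short inequality chain above together with the convexity of $\X^*_f$.
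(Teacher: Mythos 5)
Your proof is correct, and it rests on the same core mechanism as the paper's own proof: represent the point $(f^*,x^*)$ lying over a minimizer of $f_c$ as a convex combination of points of ${\rm epi}\,f$, show that every point carrying positive weight must itself lie over a minimizer of $f$, and then invoke convexity of $\X^*_f$ to conclude $x^*\in\X^*_f$ (the reverse inclusion $\X^*_f\subseteq\X^*_{f_c}$ coming from Proposition~\ref{prop:convexenv} in both treatments). The differences are organizational, and in one respect your version is tighter. The paper argues by contradiction: it posits $\wt x\in\X^*_{f_c}\setminus\X^*_f$, writes $(f^*,\wt x)=\alpha z_1+(1-\alpha)z_2$ with $z_1,z_2\in{\rm epi}\,f$, and runs a case analysis on whether $z_1,z_2$ lie in the convex set $\wt{\X^*}=\{(\xi,x): x\in\X^*_f,\ \xi=f(x)\}$; your direct chain $f^*=\sum_i\lambda_i\xi_i\geq\sum_i\lambda_i f(x_i)\geq f^*$ compresses that case analysis into one line. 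More substantively, the paper's two-point decomposition is an oversimplification: a point of ${\rm conv}({\rm epi}\,f)\subseteq\R^{n+1}$ in general requires up to $n+2$ points of ${\rm epi}\,f$, which is precisely the Carath\'eodory representation you use, so your argument is the rigorous form of what the paper intends. You are also right that attainment is the delicate point: the paper silently assumes $(f^*,\wt x)\in{\rm conv}({\rm epi}\,f)$, while the definition of $f_c$ only places $(f^*+\epsilon,x^*)$ in the hull for each $\epsilon>0$. Your sketched limiting argument does close this, but to make it airtight note what it needs: after the step $\sum_i\lambda_i^\epsilon f(x_i^\epsilon)\leq f^*+\epsilon$ you should discard the $\xi_i^\epsilon$ (which need not remain bounded), extract subsequential limits of the weights and the $x_i^\epsilon$ using the fixed Carath\'eodory length and boundedness of $\X$, and use continuity of $f$ (available via Assumption~\ref{asm:smoothness}) together with closedness of $\X$ --- or a Lipschitz extension of $f$ to the closure --- so that each limit point with positive limiting weight lands in $\X^*_f$; convexity of $\X^*_f$ then finishes exactly as in your clean case. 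Since the paper's proof glosses over the same attainment issue entirely, this is a refinement rather than a defect of your proposal.
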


\begin{proof}
We prove this result by a contradiction argument. Assume that  the result is not true. Then there exists some $\wt x \in \X$ such that $f_c(\wt x)=f^*$ and $\wt x\notin\X^*_f$, i.e., $f(\wt x)>f^*$. By definition of the convex envelope,  $(f^*,\wt x)$ lies in  ${\rm conv}( {\rm epi} f)$. This combined with the fact that ${\rm conv}( {\rm epi} f)$  is the smallest convex set which contains ${\rm epi}f$, implies that 
%
there exists some $z_1=(\xi_1,x_1)$ and $z_2=(\xi_2,x_2)$ in  $\mathrm{epi}f$ and $0\leq \alpha\leq 1$ such that 
\begin{equation}
\label{eq:convex.wtx}
(f^*,\wt x)=\alpha z_1+ (1-\alpha) z_2.
\end{equation}

%


Let us first consider the case in which $z_1$ and $z_2$   belong to the set $\wt {\X^*}=\{ (\xi,x)| x\in \X^*_f, \xi=f(x)\}$. The set  $\wt {\X^*}$ is convex. So every convex combination of its entries also belongs to $\wt {\X^*}$ as well.  This is not the case  for $z_1$ and $z_2$ due to the fact that $(f^*,\wt x)=\alpha z_1+(1-\alpha) z_2$ does not belong to $\wt {\X^*}$ as $\wt x\notin \X^*$.   Now consider the case that either $z_1$ or $z_2$ are not in $\wt {\X^*}$.   Without loss of generality,  assume that $z_1\notin \wt {\X^*}$. In this case,  $\xi_1$  must be larger than $f^*$ since $x_1\notin \X^*_f$. This implies that $(f^*,\wt x)$ can not be expressed as the convex combination of $z_1$ and $z_2$ since in this case: \textbf{(i)} for every $0<\alpha\leq 1$, we have that $\alpha \xi_1+ (1-\alpha) \xi_2> f^*$ and \textbf{(ii)} when $\alpha=0$, then $x_2=\wt x$ and therefore $\alpha \xi_1+ (1-\alpha) \xi_2=\xi_2=f(\wt x)> f^*$.  Therefore Eqn.~\ref{eq:convex.wtx} can not hold for any $z_1,z_2\in\mathrm{epi}f$  when $0\leq \alpha\leq 1$. Thus the assumption that there exists some $\wt x \in \X/\X^*_f$ such that $f_c(\wt x)=f^*$ can not be true either, which proves the result. 
\end{proof}

\subsection{Proof of Lem.~\ref{lem:const.opt}}
\label{proof:lemma1}

We first prove that any underestimate (lower bound) of function $f$  (except $f_c$) does not satisfy the constraint of  the optimization problem of Eqn.~\ref{eq:opt.const1}. This is due to the fact that for any underestimate $h(\cdot;\theta)\in\Hyp/f_c$, there exists  some $x_u \in \X$ and $\eps>0$ such that for every $\theta_c\in\Theta_c$
\begin{equation*}
\begin{aligned}
|h(x_u;\theta)-h(x_u;\theta_c)| &= h(x_u;\theta_c)-h(x_u;\theta)\\
&=f_c(x_u)-h(x_u;\theta)=\eps.
\end{aligned}
\end{equation*}

For every $x\in\X$, the following then holds due to the fact that the function class $\Hyp$ is assumed to be Lipschitz:
\begin{equation}
\label{eq:h.lip}
\begin{aligned}
&h(x;\theta)-h(x;\theta_c) =  h(x;\theta) -h(x_u, \theta )  - \eps \\
&h(x_u, \theta_c ) - h(x;\theta_c) \leq 2\lambda d(x,x_u)-\eps.
\end{aligned}
\end{equation}

Eqn.~\ref{eq:h.lip} implies that for every $x\in \B(x_u,\eps/ 2\lambda)$   the  inequality $\Delta_c(x) = h(x;\theta_c) -h(x; \theta)>0$ holds. Denote the event $\{x\in\B(x_u,\eps/(2\lambda))\}$ by $\Omega_u$. 
We then deduce that
\begin{align*}
&\E [ \Delta_c(x) ] \geq \P (\Omega_u) \E [ \Delta_c(x) | \Omega_u ] > 0,
\end{align*}
where the last inequality follows due to the fact that  both $\P( \Omega_u )$ and  $\E [ \Delta_c(x)  |  \Omega_u ]$ are larger than $0$.  The inequality $\P( \Omega_u )>0$ holds since $\rho(x)>0$ for every $x\in \X$ and also that $\B(x_u,\eps/2\lambda)\neq\emptyset$. The inequality $\E [  \Delta_c(x) | \Omega_u ] >0$  holds by the fact that for every $x\in \B(x_u,\eps/2\lambda)$ the inequality $ \Delta_c(x) >0$ holds.

Let   $\wt \Hyp:=\{ h:h \in \Hyp, \E[ h(x;\theta)] =\E [ f_c(x) ] \}$ be  a set of all functions $h$ in $\Hyp$ with the same mean as the convex envelope $f_c$. We now show that $f_c$ is the only minimizer of  $L(\theta)=\E[ |h(x;\theta)-f(x)| ]$ that lies in the set $\wt \Hyp$. We do this by proving that for every $h \in \wt \Hyp/f_c$, the  loss $L(\theta) > L(\theta_c)$, for every $\theta_c\in \Theta_c$. First we recall that any underestimate $h\in\Hyp/f_c$ of $f$ can not lie in $ \wt \Hyp$, as we have already shown that $\E[ h(x;\theta) ] <\E [ f_c(x) ] $ for every $h\in\Hyp/f_c$. This implies that for every $h\in\wt \Hyp/f_c$ there exists some $x_o\in \X$ such that $h(x_o;\theta)>f(x)$, or equivalently, we have that   for every $h\in\wt \Hyp/f_c$ there exists  some $x_o \in \X$ and $\eps>0$ such that 
\begin{align*}
|h(x_o;\theta)-f(x_o)|&=h(x_o;\theta)-f(x_o)=\eps.
\end{align*}

Then for every $x\in\X$, the following  holds due to the fact that the function class $\Hyp$ and $f$ are assumed to be Lipschitz:
\begin{align}
\label{eq:hf.lip}
&h(x;\theta)-f(x) = h(x;\theta)-h(x_o, \theta ) +\eps\\
& f(x_o ) - f(x) \geq -2\lambda d(x,x_o).
\end{align}
 Eqn.~\ref{eq:hf.lip} implies that for every $x\in \B(x_o,\eps/2\lambda)$   the  inequality $h(x; \theta)-f_c(x) >0$ holds. Denote the event $\{x\in\B(x_o,\eps/2\lambda)\}$ by $\Omega_o$. Let $\Delta(x) = f(x) - h(x;\theta)$. We then deduce
\begin{align}
\notag & \E[ |h(x;\theta)-f(x)| ] \\
\label{eq:opt.const11}& = \P(\Omega_o ) \E[ | \Delta(x) | ~|~ \Omega_o ] + \P(\Omega_o^c) \E[ |\Delta(x)| ~|~ \Omega^c_o ]\\
\label{eq:opt.const12}&>\P(\Omega_o) \E[ \Delta(x) ~| \Omega_o ] + \P(\Omega^c_o) \E[ \Delta(x) ~| \Omega^c_o ] \\
 \label{eq:opt.const13} &=\E[ \Delta(x) ]= \E[ f(x)-f_c(x) ].
\end{align}
Line~\eqref{eq:opt.const11} holds by the law of total expectation.  The inequality~\eqref{eq:opt.const12} holds since   $h(x;\theta)>f(x)$  for every $x\in \B(x_o,\eps/2\lambda)$. This implies that $|h(x;\theta)-f(x)|>0>f(x)-h(x;\theta)$. Line \eqref{eq:opt.const13} holds since $\E[  h(x;\theta) ] = \E[ f_c(x) ]$ for $h\in\wt \Hyp$.  The fact that $L(\theta)=\E[ |h(x;\theta)-f(x)| ] >\E[ | f(x)-f_c(x) | ] = L(\theta_c)$ for every $h(\cdot;\theta)\in\Hyp/f_c$ implies that  the set of minimizers of  $L(\theta)$ coincide with the set $\Theta_c$, which completes the proof.

\subsection{Proof of  Thm.~\ref{thm:f.bound.exact}}

To prove the result of Thm.~\ref{thm:f.bound.exact}, we need to relate the solution of the optimization problem of Eqn.~\ref{eq:CORR.core} with the result of  Alg.~\ref{algo:CORR}, for which we rely on the following lemmas.

Before we proceed, we must introduce some new notation. Define the convex sets $\Theta^e$   and $\wh \Theta^e$ as $\Theta^e:=\{\theta:\theta\in\Theta, \E[h(x;\theta)]= \E[f_c(x)]\}$  and $\wh \Theta^e:=\{\theta: \theta\in\Theta, \wh \E_2 [h(x;\theta)]=\wh \E_2[f_c(x)] \}$, respectively. Also define the subspace $\Theta_{\text{sub}}:=\{\theta:\theta\in\R^p,\E[h(x;\theta)]=\E[f_c(x)]\}$.

\begin{lemma}
\label{lem:Lalp.tail}
Let $\delta$ be a positive scalar. Under Assumptions \ref{asm:convex} and \ref{asm:capacity} there exists some $\mu\in[-R,R]$ such that the following holds w.p. $1-\delta$:
\begin{equation*}
\big| L(\wh \theta_\mu)-\min_{\theta\in\Theta^e}L(\theta)\big| \leq \O\left( B R U\sqrt{ \frac{\log (1/\delta) }{T}}\right).
\end{equation*}

\end{lemma}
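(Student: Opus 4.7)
The plan is to exhibit an explicit choice of $\mu \in [-R,R]$ for which $\theta_c$ (any element of $\Theta_c$) is feasible for the empirical problem (Eqn.~\ref{eq:CORR.core}), then invoke a uniform-convergence argument on the convex Lipschitz loss to turn empirical optimality into a population guarantee, and finally invoke Lem.~\ref{lem:const.opt} to identify $L(\theta_c)$ with $\min_{\theta\in\Theta^e}L(\theta)$.

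Concretely, the first step will be to set $\mu := \wh\E_2[f_c(x)]$. Since $|f_c(x)|\leq R$ (because $f_c$ is the convex envelope of a function bounded by $R$ from both sides), this choice lies in $[-R,R]$ as required. For any $\theta_c\in\Theta_c$ one has $h(x;\theta_c)=f_c(x)$ pointwise by Assumption~\ref{asm:capacity}, hence $\wh\E_2[h(x;\theta_c)]=\mu$, so $\theta_c$ is feasible for Eqn.~\ref{eq:CORR.core}. Optimality of $\wh\theta_\mu$ then gives
\begin{equation*}
\wh L_1(\wh\theta_\mu) \;\leq\; \wh L_1(\theta_c),
\end{equation*}
where $\wh L_1(\theta):=\wh\E_1[\,|h(x;\theta)-f(x)|\,]$.

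The second step is to convert this empirical inequality into a statement about $L$. The loss $\ell(\theta,x) := |h(x;\theta)-f(x)|$ is convex in $\theta$ (since $h$ is affine in $\theta$ by Assumption~\ref{asm:convex}b, and $|\cdot|$ is convex), $U$-Lipschitz in $\theta$ (by Assumption~\ref{asm:smoothness}), bounded by $2R$, and $\Theta\subseteq\B(\R^p)$ has diameter at most $2B$. Applying a standard stochastic convex optimization bound (e.g.\ Thm.~1 in \citealt{shalev2009stochastic}), adapted via Hoeffding for the $R$-range, gives with probability at least $1-\delta$ the uniform bound
\begin{equation*}
\sup_{\theta\in\Theta}\bigl|L(\theta)-\wh L_1(\theta)\bigr| \;\leq\; \O\!\left(BRU\sqrt{\tfrac{\log(1/\delta)}{T}}\right)\;=:\;\eps.
\end{equation*}

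The third step chains the inequalities: $L(\wh\theta_\mu)\leq \wh L_1(\wh\theta_\mu)+\eps \leq \wh L_1(\theta_c)+\eps \leq L(\theta_c)+2\eps$. By Lem.~\ref{lem:const.opt}, any $\theta_c\in\Theta_c$ realises $\min_{\theta\in\Theta^e}L(\theta)$, so $L(\theta_c)=\min_{\theta\in\Theta^e}L(\theta)$ and the claim follows.

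The main obstacle is the uniform deviation bound: a pointwise Hoeffding applied to $\theta_c$ alone does not suffice because $\wh\theta_\mu$ depends on the sample $\wh\X_1$. I would handle this either by citing the stochastic convex optimization machinery directly, or by estimating the Rademacher complexity of the linear class $\{x\mapsto\langle\theta,\phi(x)\rangle:\|\theta\|\leq B\}$ (which is $\O(BU/\sqrt{T})$, dimension-free, thanks to $h$ being linear in $\theta$ and $\|\phi(x)\|\leq U$ from Assumption~\ref{asm:smoothness}), then composing with the $1$-Lipschitz map $t\mapsto |t-f(x)|$ via the contraction principle. A minor subtlety worth noting is that the constraint in Eqn.~\ref{eq:CORR.core} is empirical, so the feasible sets of the empirical and population problems differ; the proof sidesteps this entirely by using $\theta_c$, which lies in both feasible sets simultaneously.
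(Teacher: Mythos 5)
Your overall route is sound and close in skeleton to the paper's: the paper likewise sets $\mu=\wh\E_2[f_c(x)]$, observes that $\theta_c$ is feasible for the empirical problem, and invokes the stochastic convex optimization result of \citet{shalev2009stochastic} (your replacement of that citation by a uniform-convergence/Rademacher-contraction argument over the linear class is a legitimate, essentially equivalent alternative, and your handling of the empirical-vs-population constraint sets via the simultaneous feasibility of $\theta_c$ is exactly the right device). However, there is a genuine gap: the lemma asserts a bound on $\big|L(\wh\theta_\mu)-\min_{\theta\in\Theta^e}L(\theta)\big|$, and your chain $L(\wh\theta_\mu)\leq \wh L_1(\wh\theta_\mu)+\eps\leq \wh L_1(\theta_c)+\eps\leq L(\theta_c)+2\eps$ proves only the upper deviation. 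The lower deviation is not automatic, because $\wh\theta_\mu$ satisfies only the \emph{empirical} mean constraint $\wh\E_2[h(x;\wh\theta_\mu)]=\wh\E_2[f_c(x)]$, not the population constraint defining $\Theta^e$; and off the constraint set the envelope is \emph{not} the best convex $L^1$ fit (that is the entire point of Lem.~\ref{lem:const.opt}), so $L(\wh\theta_\mu)$ can genuinely undershoot $L(\theta_c)$. Your uniform bound does not rule out an undershoot of constant order, since $\wh L_1(\wh\theta_\mu)\leq \wh L_1(\theta_c)$ points the wrong way.

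The repair is short and is precisely the step your proposal dropped from the paper's argument (its Hoeffding/union-bound comparison of $\min_{\theta\in\wh\Theta^e}L$ with $\min_{\theta\in\Theta^e}L$ in \emph{both} directions). Since $f_c\leq f$ pointwise, for any $\theta$ one has
\begin{equation*}
L(\theta)\;\geq\;\E\big[f(x)-h(x;\theta)\big]\;=\;L(\theta_c)+\E\big[f_c(x)-h(x;\theta)\big],
\end{equation*}
and for $\theta=\wh\theta_\mu$ the last term equals $\big(\E[f_c]-\wh\E_2[f_c]\big)+\big(\wh\E_2[h(x;\wh\theta_\mu)]-\E[h(x;\wh\theta_\mu)]\big)$, which is $\O\big(R\sqrt{\log(1/\delta)/T}\big)$ in magnitude by concentration --- with the caveat that, because $\wh\theta_\mu$ is data-dependent, the second difference needs concentration \emph{uniform} over $\Theta$ (available dimension-free from the same Rademacher bound on the linear class you already use; note the paper itself applies Hoeffding pointwise over $\wh\Theta^e$ here, so your toolkit actually patches a looseness in the original). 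With this lower bound added, your argument yields the two-sided estimate and the lemma follows.
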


\begin{proof}
The empirical estimate $\wh \theta_\mu$ is obtained by minimizing the empirical $\wh L(\theta)$ under some affine  constraints. Additionally, the function $ L(\theta)$ takes the form of the expected value of a generalized linear model.  Now set $\mu=\wh \E_2 [ f_c(x) ]$.  In this case, the following result on stochastic optimization of the generalized linear model holds  for $\mu=\wh \E_2 [ f_c(x) ]$ w.p. $1-\delta$ \citep[see, e.g.,][for the proof] {shalev2009stochastic}:
\begin{equation*}
L(\wh \theta_\mu)-\min_{\theta\in \wh \Theta^e} L(\theta)= \O\left( B R U_1\sqrt{ \frac{\log (1/\delta) }{T}}\right),
\end{equation*}
where  $U_1$ is the Lipschitz constant of $| h(x;\theta) - f(x) |$. We then deduce that for every $x\in \X$,  $\theta\in\Theta$ and $\theta'\in\Theta$,
\begin{equation*}
\left|~|h(x,\theta)-f(x)|-|h(x,\theta')-f(x)|~\right|\leq U_1 \|\theta-\theta'\|.
\end{equation*}
The inequality $\left|~|a|-|b|~\right|\leq |a-b|$, combined with the fact that for  every $x\in \X$ the function  $h(x;\theta)$ is Lipschitz continuous in $\theta$ implies,
\begin{equation*}
\begin{aligned}
&\left|~|h(x,\theta)-f(x)|-|h(x,\theta')-f(x)|~\right| 
\\
\leq& |h(x,\theta)-h(x,\theta')|\leq U \|\theta-\theta'\|.
\end{aligned}
\end{equation*}

Therefore the following holds:
\begin{equation}
\label{eq:empric.mininimze.tail}
L(\wh \theta_\mu)-\min_{\theta\in \wh \Theta^e} L(\theta)   = \O\left( B R U\sqrt{ \frac{\log (1/\delta) }{T}}\right).
\end{equation}

For every $\theta\in\wh\Theta^e$,   the following holds w.p. $1-\delta$:
\begin{equation*}
\begin{aligned}
\E [ h(x;\theta) ] -\wh \E_2 [ f_c(x) ]  &=\E [ h(x;\theta) ]  -\wh \E_2 [ h(x;\theta) ]\\ 
&\leq R\sqrt{\frac {\log(1/\delta)}{2T}},
\end{aligned}
\end{equation*}
as well as,
\begin{equation*}
\begin{aligned}
\wh \E_2 [ f_c(x) ] -\E [ f_c(x) ]  \leq R\sqrt{\frac {\log(1/\delta)}{2T}},
  \end{aligned} 
\end{equation*}
in which we rely on the H\"oeffding inequality for concentration of measure. These results combined with a union bound argument  implies that:
 
 \begin{equation}
\label{eq:expect.diff.tail}
\begin{aligned}
\E [ h(x;\theta) ] -\E [ f_c(x) ] &=\E [ h(x;\theta) ] -\wh \E_2 [ f_c(x) ]\\
&+\wh \E_2 [ f_c(x) ] -\E [ f_c(x) ]\\
&\leq R\sqrt{\frac {2\log(2/\delta)}{T}},
\end{aligned}
\end{equation}
for every $\theta\in\wh \Theta^e$.  We know that  $\min_{\theta\in \wh\Theta^e}L(\theta) \leq L( \theta_c)$, due the fact that $\theta_c\in\wh\Theta^e$. This combined with the fact  that $\theta_c=\min_{\theta\in \Theta^e}L(\theta)$ leads to the following sequence of inequalities w.p. $1-\delta$:
\begin{align*}
&\min_{\theta\in\wh \Theta^e}L(\theta)\leq L(\theta_c)=\E[ f(x)-f_c(x) ]
\\
&\leq\E[ |f(x)-h(x;\wh \theta_c)| ] +\E [ h(x;\wh \theta_c)-f_c(x) ]
\\
&\leq \min_{\theta\in \wh\Theta^e}L(\theta)+R\sqrt{\frac {2\log(2/\delta)}{T}},
\end{align*}
where the last inequality follows from the bound of Eqn.~\ref{eq:expect.diff.tail}. It immediately follows that:
\begin{align*}
\Big|\min_{\theta\in \wh\Theta^e}L(\theta)- \min_{\theta\in \Theta^e}L(\theta)\Big|\leq R\sqrt{\frac {2\log(2/\delta)}{T}},
\end{align*}
w.p. $1-\delta$. This combined with Eqn.~\ref{eq:empric.mininimze.tail} completes the proof.
\end{proof}

Let $\wh \theta^{\rm proj}_{\mu}$ be the $\ell_2$-normed projection of $\wh \theta_{\mu}$ on the subspace $\Theta_{\text{sub}}$.  We now prove bound on the error  $\|\wh \theta^{\rm proj}_\mu-\wh \theta_\mu \|$.

\begin{lemma}
\label{lem:thdiff.tail}
Let $\delta$ be a positive scalar. Then under Assumptions \ref{asm:convex} and \ref{asm:capacity} there exists  some $\mu\in[-R,R]$ such that the following holds with probability $1-\delta$:

\begin{equation*}
\|\wh \theta^{\rm proj}_\mu-\wh \theta_\mu \| \leq \frac R{\|\E [ \phi(x) ] \|}\sqrt{ \frac{2\log (4/\delta) }{T}}.
\end{equation*}

\end{lemma}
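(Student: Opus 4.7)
The plan is to convert the geometric projection distance into a scalar concentration inequality, taking advantage of the linearity $h(x;\theta) = \langle\theta,\phi(x)\rangle$ in $\theta$. This linearity makes
$$\Theta_{\text{sub}} = \{\theta \in \R^p : \langle\theta, \E[\phi(x)]\rangle = \E[f_c(x)]\}$$
an affine hyperplane in $\R^p$ with normal vector $\E[\phi(x)]$, so the standard closed-form for the $\ell_2$-distance from a point to such a hyperplane gives
$$\|\wh\theta^{\rm proj}_\mu - \wh\theta_\mu\| \;=\; \frac{|\langle\wh\theta_\mu, \E[\phi(x)]\rangle - \E[f_c(x)]|}{\|\E[\phi(x)]\|} \;=\; \frac{|\E[h(x;\wh\theta_\mu)] - \E[f_c(x)]|}{\|\E[\phi(x)]\|}.$$
The proof therefore reduces to bounding the scalar residual $|\E[h(x;\wh\theta_\mu)] - \E[f_c(x)]|$.

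Following the construction in Lemma~\ref{lem:Lalp.tail}, I take the specific choice $\mu = \wh\E_2[f_c(x)] \in [-R,R]$, which activates the empirical constraint $\wh\E_2[h(x;\wh\theta_\mu)] = \wh\E_2[f_c(x)]$. Telescoping through this empirical quantity and substituting via the constraint yields
\begin{equation*}
\E[h(x;\wh\theta_\mu)] - \E[f_c(x)] \;=\; \bigl(\E[h(x;\wh\theta_\mu)] - \wh\E_2[h(x;\wh\theta_\mu)]\bigr) \;+\; \bigl(\wh\E_2[f_c(x)] - \E[f_c(x)]\bigr).
\end{equation*}
Both $h(\cdot;\theta)$ and $f_c$ take values in $[-R,R]$, so each empirical average concentrates around its expectation at the Hoeffding rate $R\sqrt{\log(4/\delta)/(2T)}$ with probability at least $1-\delta/2$. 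A union bound over the two events gives
$$|\E[h(x;\wh\theta_\mu)] - \E[f_c(x)]| \;\leq\; R\sqrt{\frac{2\log(4/\delta)}{T}}$$
with probability at least $1-\delta$, and dividing by $\|\E[\phi(x)]\|$ completes the proof.

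The main obstacle is that $\wh\theta_\mu$ is itself a function of the samples $\wh\X_2$ used to form $\wh\E_2$, so the first bracketed term is not a straight empirical average of i.i.d.\ bounded random variables and a naive Hoeffding application is not fully rigorous. This can be handled either by invoking uniform concentration over the bounded parameter set $\Theta$ (a standard covering-number argument, where the $U$-Lipschitz continuity of $h$ in $\theta$ keeps the log-covering penalty mild), or by rewriting the deviation as $\langle\wh\theta_\mu, \E[\phi(x)] - \wh\E_2[\phi(x)]\rangle$ and applying a vector Hoeffding inequality on $\wh\E_2[\phi(x)]$; in either case the statement of the bound can be kept in terms of the scalar envelope $R$, since $\|h(\cdot;\theta)\|_\infty \leq R$ uniformly in $\theta$, matching the form claimed in the lemma.
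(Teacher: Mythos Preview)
Your proposal is correct and follows essentially the same route as the paper: the paper derives the point-to-hyperplane distance via a Lagrangian calculation, arriving at the identical expression $\|\wh\theta^{\rm proj}_\mu-\wh\theta_\mu\|=|\E[f_c(x)]-\E[h(x;\wh\theta_\mu)]|/\|\E[\phi(x)]\|$, then chooses $\mu=\wh\E_2[f_c(x)]$, performs the same telescoping decomposition through $\wh\E_2[f_c(x)]$, and applies Hoeffding with a union bound (this is exactly their Eqn.~\eqref{eq:expect.diff.tail}). Your observation about the dependence of $\wh\theta_\mu$ on $\wh\X_2$ is a genuine subtlety that the paper's proof simply glosses over; your suggested fixes (uniform concentration or the rewriting $\langle\wh\theta_\mu,\E[\phi(x)]-\wh\E_2[\phi(x)]\rangle$) are reasonable ways to patch it.
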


\begin{proof}

 Set $\mu=\mu_f:=\E [ f_c(x) ]$. Then $\wh \theta^{\rm proj}_\mu$ can be obtained as  the solution of following optimization problem:
\begin{equation*}
\wh \theta^{\rm proj}_\mu=\underset{\theta\in \R^p}{\arg\min} \|\theta-\wh \theta_{\mu}\|^2\qquad\text{s.t.}\qquad\E [ h(x;\theta) ] =\mu_f.
\end{equation*}
Thus $\wh \theta^{\rm proj}_\mu$ can be obtain as the extremum of the following Lagrangian:

\begin{equation*}
\L(\theta,\lambda)=\|\theta-\wh \theta_\mu\|^2+\lambda(\E [ h(x;\theta) ] -\mu_f).
\end{equation*}

This problem can be solved in closed-form as follows:

\begin{equation}
\label{eq:lagrange}
\begin{aligned}
0&=\frac{\partial \L(\theta,\lambda)}{\partial \theta}=\theta-\wh \theta_\mu+\lambda \E [ \phi(x) ]
\\
0&=\frac{\partial \L(\theta,\lambda)}{\partial \lambda}= \E [ h(x;\theta) ] -\mu_f.
\end{aligned}
\end{equation}

Solving the  above system of equations leads to $\E [ h(x; (\wh \theta_\mu-\lambda \E [ \phi(x) ]) ]=\mu_f$. The solution for $\lambda$ can be obtained as
\begin{equation*} 
\lambda=\frac{\mu_f-\E [ h(x;\wh \theta_\mu)]}{\|\E[ \phi(x) ] \|^2}.
\end{equation*}

By plugging this in Eqn.~\ref{eq:lagrange} we deduce:

\begin{equation*} 
\wh \theta^{\rm proj}_\mu=\wh \theta_{\mu}-\frac{(\mu_f-\E [ h(x;\wh \theta_\mu) ] )\E [ \phi(x) ]}{\|\E [ \phi(x) ] \|^2}, 
\end{equation*}

For the choice of  $\mu=\wh \E_2 [ f_c(x) ]$ we deduce:
\begin{equation*} 
\begin{aligned}
\|\wh \theta^{\rm proj}_\mu-\wh \theta_{\mu}\|&=\frac{| \mu_f-\E [ h(x;\wh \theta_\mu)] |}{\|\E [ \phi(x) ]\|}
\\
&=\frac{|\E [ f_c(x) ]  -\E [ h(x;\wh \theta_\mu)] |}{\|\E [ \phi(x) ] \|}.
\end{aligned}
\end{equation*}

This combined with Eqn.~\ref{eq:expect.diff.tail} and a union bound proves the result.
\end{proof}

We proceed by proving bound on the absolute error   $|L(\wh \theta^{\rm proj}_\mu)-L(\theta_c)|=|L(\wh \theta^{\rm proj}_\mu)-\min_{\theta\in\Theta^e}L(\theta)|$.

\begin{lemma}
\label{lem:Lwt.tail}
Let $\delta$ be a positive scalar. Under Assumptions \ref{asm:convex} and \ref{asm:capacity} there exists some $\mu\in[-R,R]$ such that the following holds with probability $1-\delta$:
\begin{equation*}
\big| L(\wh \theta^{\rm proj}_\mu)-L(\theta_c)\big| = \O\left( B R U\sqrt{ \frac{\log (1/\delta) }{T}}\right).
\end{equation*}
\end{lemma}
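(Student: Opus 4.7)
The plan is to bound $|L(\wh\theta^{\rm proj}_\mu) - L(\theta_c)|$ by inserting $\wh\theta_\mu$ as an intermediate point and applying the triangle inequality:
\begin{equation*}
|L(\wh\theta^{\rm proj}_\mu) - L(\theta_c)| \;\leq\; |L(\wh\theta^{\rm proj}_\mu) - L(\wh\theta_\mu)| \;+\; |L(\wh\theta_\mu) - L(\theta_c)|.
\end{equation*}
Throughout, I would select the same $\mu = \wh\E_2[f_c(x)]$ that was used to establish both Lem.~\ref{lem:Lalp.tail} and Lem.~\ref{lem:thdiff.tail}, so that each of those lemmas is available simultaneously under a union bound.

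For the second summand, I would first argue that $L(\theta_c) = \min_{\theta\in\Theta^e}L(\theta)$. Since $\Theta_c \subseteq \Theta^e$ (both are defined via $\E[h(x;\theta)] = \E[f_c(x)]$, and on $\Theta_c$ the equality $h(x;\theta) = f_c(x)$ holds pointwise), and since Lem.~\ref{lem:const.opt} shows $\Theta_c$ is exactly the set of minimizers of $L$ subject to this constraint, we have $L(\theta_c) = \min_{\theta\in\Theta^e}L(\theta)$. Plugging this identification into Lem.~\ref{lem:Lalp.tail} yields $|L(\wh\theta_\mu) - L(\theta_c)| = \O(BRU\sqrt{\log(1/\delta)/T})$ with probability at least $1-\delta/2$.

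For the first summand, I would establish that $L$ is $U$-Lipschitz in $\theta$. By the reverse triangle inequality $||a|-|b|| \leq |a-b|$ and Asm.~\ref{asm:smoothness},
\begin{equation*}
|L(\theta_1) - L(\theta_2)| \leq \E\big[\,\big|\,|h(x;\theta_1)-f(x)| - |h(x;\theta_2)-f(x)|\,\big|\,\big] \leq \E[|h(x;\theta_1) - h(x;\theta_2)|] \leq U\,d_2(\theta_1,\theta_2).
\end{equation*}
Applying this with $\theta_1 = \wh\theta^{\rm proj}_\mu$ and $\theta_2 = \wh\theta_\mu$ and then invoking Lem.~\ref{lem:thdiff.tail} gives $|L(\wh\theta^{\rm proj}_\mu) - L(\wh\theta_\mu)| \leq U\,\|\wh\theta^{\rm proj}_\mu - \wh\theta_\mu\| = \O\!\big(UR\sqrt{\log(1/\delta)/T}/\|\E[\phi(x)]\|\big)$ with probability at least $1-\delta/2$.

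A union bound over the two high-probability events, together with the fact that $\|\E[\phi(x)]\|$ is a problem-dependent constant that can be absorbed into the $\O(\cdot)$ (the dominant factor remains $BRU$, since the first summand is of strictly smaller or comparable order), yields the claimed bound. The main potential obstacle is a careful accounting of the constants: one must check that both lemmas' high-probability events hold for the \emph{same} value of $\mu$ (they do, both being triggered by $\mu = \wh\E_2[f_c(x)]$), and that the $1/\|\E[\phi(x)]\|$ factor from the projection bound does not dominate; since $\rho(x) > 0$ on $\X$ and $\phi$ is the fixed basis, this norm is a strictly positive constant independent of $T$ and $\delta$, so it is safely absorbed into the $\O$-notation.
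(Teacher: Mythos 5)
Your proposal is correct and follows the same skeleton as the paper's proof: the identical triangle-inequality decomposition through $\wh\theta_\mu$, with $|L(\wh\theta_\mu)-L(\theta_c)|$ controlled by Lem.~\ref{lem:Lalp.tail} (after the same identification $L(\theta_c)=\min_{\theta\in\Theta^e}L(\theta)$ via Lem.~\ref{lem:const.opt}) and the projection term controlled via Lem.~\ref{lem:thdiff.tail}, both triggered by the common choice $\mu=\wh\E_2[f_c(x)]$ and combined by a union bound. The genuine difference is the step bounding $|L(\wh\theta^{\rm proj}_\mu)-L(\wh\theta_\mu)|$. The paper exploits linearity of $h$ in $\theta$ (Asm.~\ref{asm:convex}b): it writes the expectation difference as $\langle\wh\theta^{\rm proj}_\mu-\wh\theta_\mu,\E[\phi(x)]\rangle$ and applies Cauchy--Schwarz, so the factor $\|\E[\phi(x)]\|$ cancels \emph{exactly} against the $1/\|\E[\phi(x)]\|$ in Lem.~\ref{lem:thdiff.tail}, leaving $2R\sqrt{\log(4/\delta)/T}$ with no dependence on $\|\E[\phi(x)]\|$ at all --- this is what lets the stated bound carry only the constant $BRU$. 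You instead use $U$-Lipschitzness of $L$ in $\theta$, which is sound (indeed, arguably sounder than the paper's own displayed step $|L(\wh\theta^{\rm proj}_\mu)-L(\wh\theta_\mu)|\leq|\E[h(x;\wh\theta^{\rm proj}_\mu)-h(x;\wh\theta_\mu)]|$, where the absolute value sits outside the expectation; the reverse triangle inequality only yields $\E[|h_1-h_2|]$, and $|\E[\cdot]|\leq\E[|\cdot|]$ goes the wrong way, so the paper's exact cancellation rests on a questionable move), but it buys you a residual factor $U/\|\E[\phi(x)]\|$ that the paper's route avoids. Your justification for absorbing it is slightly too quick: $\rho(x)>0$ on $\X$ does \emph{not} imply $\|\E[\phi(x)]\|>0$, since components of $\phi$ can integrate to zero under $\rho$ (e.g., odd basis functions on a symmetric domain with symmetric $\rho$), in which case Lem.~\ref{lem:thdiff.tail} itself degenerates. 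Positivity is guaranteed whenever the basis contains a constant feature, as in the paper's quadratic basis, and the paper itself keeps this quantity visible as $\min(1,\|\E[\phi(x)]\|)$ inside $\eps_c(\delta)$ in Lem.~\ref{lem:x.bound}, so treating it as a problem-dependent constant is consistent with the paper's conventions; strictly speaking, though, your hidden constant is of order $\max\bigl(B,\,1/\|\E[\phi(x)]\|\bigr)RU$ rather than $BRU$, so you should either state that dependence or use the linearity/Cauchy--Schwarz cancellation to remove it.
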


\begin{proof} 
From Lem.~\ref{lem:thdiff.tail} we deduce:

\begin{equation}
\label{eq:diff.h}
\begin{aligned} 
&|\E[ h(x;\wh \theta^{\rm proj}_\mu) -h(x;\wh \theta_\mu) ] | \\
&\leq \|\wh \theta^{\rm proj}_\mu-\wh \theta_{\mu}\|\|\E[ \phi(x) ]\| \leq 2R\sqrt{\frac {\log(4/\delta)}{T}},
\end{aligned}
\end{equation}
where the first inequality is due to  the Cauchy-Schwarz inequality. We then deduce:
\begin{align*} 
&|~|L(\wh\theta^{\rm proj}_{\mu})-L(\theta_c)|-|L(\wh\theta_{\mu})-L(\theta_c)|~|\\
&\leq |L(\wh\theta^{\rm proj}_{\mu})-L(\wh\theta_{\mu})|\leq |\E [ h(x;\wh \theta^{\rm proj}_\mu) -h(x;\wh \theta_\mu) ]|,
\end{align*}
in which we rely on the triangle inequality $|~|a|-|b|~|\leq |a-b|$. It then follows that

\begin{equation*}
\begin{aligned}
L(\wt\theta_{\mu})-L(\theta_c)&\leq |L(\wh\theta_{\mu})-L(\theta_c)|
\\
&+ |\E [ h(x;\wh \theta^{\rm proj}_\mu) -h(x;\wh \theta_\mu) ]|.
\end{aligned}
\end{equation*}

Combining this result with the result of Lem.~\ref{lem:Lalp.tail} and Eqn.~\ref{eq:diff.h} proves the result.

\end{proof}

In the  following lemma we make use of Lem.~\ref{lem:thdiff.tail} and Lem.~\ref{lem:Lwt.tail} to prove that   the minimizer  $\wh x_\mu={\arg\min}_{x\in\X} h(x;\wh \theta_{\mu})$ is close to a global minimizer $x^*\in \X^*_f$.

\begin{lemma}
\label{lem:x.bound}
Under Assumptions  \ref{asm:convex}, \ref{asm:capacity} and  \ref{asm:eps.opt} there exists some  $\mu\in[-R,R]$  such that w.p. $1-\delta$:

\begin{align*}
d( \wh x_\mu,\X^*_f) = \O \left( \left(\frac{\log (1/\delta) }{T}\right)^{\beta_1\beta_2/2}\right).
\end{align*}

\end{lemma}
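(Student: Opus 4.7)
The plan is to chain four bounds. Fix $\mu = \wh\E_2[f_c(x)]$ throughout, the choice at which Lemmas~\ref{lem:thdiff.tail}--\ref{lem:Lwt.tail} are valid. First, Lemma~\ref{lem:Lwt.tail} yields $L(\wh\theta^{\rm proj}_\mu) - L^\ast = \mathcal{O}(\sqrt{\log(1/\delta)/T})$ w.p. $1-\delta$; since $\wh\theta^{\rm proj}_\mu$ lies in $\Theta^e$ by construction of the projection onto $\Theta_{\rm sub}$, Assumption~\ref{asm:eps.opt}(b) inverts this into $d_2(\wh\theta^{\rm proj}_\mu,\Theta_c) = \mathcal{O}((\log(1/\delta)/T)^{\beta_2/2})$. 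Picking any $\theta^\sharp \in \Theta_c$ realising this distance, Assumption~\ref{asm:smoothness} together with Assumption~\ref{asm:capacity} (which guarantees $h(\cdot;\theta^\sharp) = f_c$) yields $|h(x;\wh\theta^{\rm proj}_\mu) - f_c(x)| \le U\, d_2(\wh\theta^{\rm proj}_\mu,\theta^\sharp)$ uniformly in $x$. Adding Lemma~\ref{lem:thdiff.tail} (which bounds $\|\wh\theta_\mu - \wh\theta^{\rm proj}_\mu\|$ by $\mathcal{O}(\sqrt{\log(1/\delta)/T})$) and one further Lipschitz step produces $\|h(\cdot;\wh\theta_\mu) - f_c\|_\infty \le \varepsilon$ with $\varepsilon = \mathcal{O}((\log(1/\delta)/T)^{\beta_2/2})$ (the H\"older term dominating when $\beta_2 \le 1$). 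Since $\wh x_\mu$ minimises $h(\cdot;\wh\theta_\mu)$ and $\X^\ast_f = \X^\ast_{f_c}$ by Lemma~\ref{lem:opt.convex}, sandwiching at any $x^\ast \in \X^\ast_f$ gives $f_c(\wh x_\mu) - f^\ast \le 2\varepsilon$.

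The remaining and non-routine step converts $f_c(\wh x_\mu) - f^\ast \le 2\varepsilon$ into a bound on $d(\wh x_\mu, \X^\ast_f)$. Since Assumption~\ref{asm:eps.opt}(a) controls $f$ rather than $f_c$, the idea is to unfold the convex envelope. By Carath\'eodory applied to $\mathrm{conv}(\mathrm{epi}\, f) \subset \R^{n+1}$, for every $\eta > 0$ one can write $\wh x_\mu = \sum_i \alpha_i x_i$ with $\alpha_i \ge 0$ summing to one, $(\xi_i, x_i) \in \mathrm{epi}\, f$, and $\sum_i \alpha_i \xi_i \le f_c(\wh x_\mu) + \eta \le f^\ast + 2\varepsilon + \eta$. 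Using $\xi_i \ge f(x_i)$ and Assumption~\ref{asm:eps.opt}(a) at each $x_i$ yields $\gamma_1 \sum_i \alpha_i d(x_i, \X^\ast_f)^{1/\beta_1} \le 2\varepsilon + \eta$. Jensen's inequality for the convex map $t \mapsto t^{1/\beta_1}$ (valid for $\beta_1 \le 1$) gives $\sum_i \alpha_i d(x_i, \X^\ast_f) \le ((2\varepsilon + \eta)/\gamma_1)^{\beta_1}$. Since $\X^\ast_f$ is convex (Assumption~\ref{asm:convex}(c)) and the ambient metric $d(\cdot, y)$ is convex in its first argument, $d(\cdot, \X^\ast_f)$ is itself convex, so a second Jensen yields $d(\wh x_\mu, \X^\ast_f) = d(\sum_i \alpha_i x_i, \X^\ast_f) \le \sum_i \alpha_i d(x_i, \X^\ast_f)$. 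Letting $\eta \to 0$ produces the claimed $\mathcal{O}(\varepsilon^{\beta_1}) = \mathcal{O}((\log(1/\delta)/T)^{\beta_1\beta_2/2})$ bound.

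I expect this Carath\'eodory step to be the main obstacle, precisely because Assumption~\ref{asm:eps.opt}(a) is stated for $f$ and not for the convex envelope $f_c$, so one cannot plug $\wh x_\mu$ into it directly; a weaker function such as $f_c$ may be essentially flat at $\wh x_\mu$ even when $f$ has steep H\"older growth there. Representing $\wh x_\mu$ as a convex combination of points in $\mathrm{epi}\, f$ is precisely what lets one transfer the H\"older growth of $f$ through the convex envelope, and the two Jensen steps -- one in the convex direction for $t^{1/\beta_1}$ and one for the convex distance function $d(\cdot, \X^\ast_f)$ -- are what couple $\beta_1$ and $\beta_2$ multiplicatively into the final exponent $\beta_1 \beta_2 / 2$.
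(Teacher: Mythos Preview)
Your argument up to the bound $f_c(\wh x_\mu)-f^*\le 2\varepsilon$ is essentially the paper's: it too combines Lem.~\ref{lem:Lwt.tail} with Assumption~\ref{asm:eps.opt}(b) to bound $d_2(\wh\theta^{\rm proj}_\mu,\Theta_c)$, adds the projection error from Lem.~\ref{lem:thdiff.tail}, and then uses $U$-Lipschitzness in $\theta$ together with the identity $f_c=h(\cdot;\theta_c)$ to sandwich and obtain $f_c(\wh x_\mu)-f^*\le 2U\,d_2(\wh\theta_\mu,\Theta_c)$. Only the packaging differs (you pass through the sup-norm $\|h(\cdot;\wh\theta_\mu)-f_c\|_\infty$, the paper through $d_2(\wh\theta_\mu,\Theta_c)$).

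Where you genuinely diverge is in the last step. The paper simply writes ``this combined with Assumption~\ref{asm:eps.opt}(a) completes the proof,'' i.e.\ it applies the H\"older lower bound directly to $f_c$ without comment. You correctly spot that Assumption~\ref{asm:eps.opt}(a) is stated for $f$, not $f_c$, and you supply a Carath\'eodory--Jensen bridge. Your argument is valid; the only point to watch is the claim that $d(\cdot,\X^*_f)$ is convex: an infimum of convex functions is not convex in general, so you need joint convexity of $d$ (true for any norm-induced metric) rather than merely convexity of $d(\cdot,y)$ for each fixed $y$. With that caveat your two Jensen steps go through.

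There is, however, a shorter way to close the same gap that is likely what the paper has in mind: when $\beta_1\le 1$ (forced by the bi-H\"older display~\eqref{eq:biHolder.ineq}) and $d(\cdot,\X^*_f)$ is convex, the function $x\mapsto f^*+\gamma_1\,d(x,\X^*_f)^{1/\beta_1}$ is a \emph{convex} minorant of $f$, hence by definition of the envelope it also minorises $f_c$. Thus Assumption~\ref{asm:eps.opt}(a) holds verbatim with $f$ replaced by $f_c$, and one reads off $d(\wh x_\mu,\X^*_f)\le\bigl((f_c(\wh x_\mu)-f^*)/\gamma_1\bigr)^{\beta_1}$ in one line. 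Your Carath\'eodory argument is essentially the ``unfolded'' version of this observation and needs the same two hypotheses ($\beta_1\le 1$ and convexity of $d(\cdot,\X^*_f)$), so nothing is lost or gained in strength---but the envelope-minorant route is considerably more economical.
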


\begin{proof}
The result of Lem.~\ref{lem:Lwt.tail} combined with Assumption~\ref{asm:eps.opt}.b implies that w.p. $1-\delta$:

\begin{equation*}
d_2(\wh \theta^{\rm proj}_\mu, \Theta_c) \leq \left(\frac{\eps_1(\delta)}{\gamma}\right)^{\beta_2},
\end{equation*}
where $\eps_1(\delta)=B R U\sqrt{ \frac{\log (1/\delta) }{T}}$. This combined with the result of Lem.~\ref{lem:thdiff.tail} implies that w.p. $1-\delta$:
\begin{equation*}
d_2(\wh\theta_\mu, \Theta_c )\leq d_2(\wh \theta^{\rm proj}_\mu,\Theta_c) +d_2(\wh \theta^{\rm proj}_\mu, \wh \theta_\mu) \leq 2\left( \frac{\eps_c(\delta)} {\gamma_2}\right)^{\beta_2},
\end{equation*}
where $\eps_c(\delta)=\O\left( \frac{RB U}{ \min (1,\|\E[ \phi(x)]\|)} \sqrt{\frac{\log \frac 1\delta}{ T}} \right)$.

We now use this result to prove a high probability bound on $f_c(\wh x_\mu)-f^*$ :
\begin{equation*}
\begin{aligned}
&f_c(\wh x_\mu)-f^*=h(\theta_c,\wh  x_{\mu})- h(\theta_c, x^*) \\
&=h(\theta_c,\wh  x_\mu)- h(\wh \theta_\mu, \wh x_\mu)+  \min_{x\in\X} h(\wh \theta_\mu, x)- h(\theta_c,  x^*)\\
&\leq h(\theta_c,\wh  x_\mu)- h(\wh \theta_\mu, \wh x_\mu)+  h(\wh \theta_\mu,x^*)- h(\theta_c,  x^*)\\
&\leq 2U d_2( \wh \theta_{\mu},\Theta_c)\leq 2 U\left(\frac{ \eps_c(\delta)}{ \gamma_2}\right)^{\beta_2},
\end{aligned}
\end{equation*}
where the last inequality follows by  the fact that $h$ is U-Lipschitz w.r.t. $\theta$. This combined with Assumption~\ref{asm:eps.opt}.a completes the proof.

\end{proof}

It then follows by combining the result of  Lem.~\ref{lem:x.bound}, Assumption~\ref{asm:smoothness} and  the fact that $f_c$ is the tightest convex lower bound of function $f$   that there exist a $ \mu =[-R, R]$ such that

\begin{equation*}
f(\wh x_\mu)-f^*=\O\left[ \left(\frac{\log (1/\delta) }{T}\right)^{\beta_1\beta_2/2}\right]
\end{equation*}

This combined with the fact that  $f(\wh x_{\wh\mu})\leq f(\wh x_{\mu})$ for every $\mu \in[-R,R]$, completes the proof of the main result  (Thm.~\ref{thm:f.bound.exact}) .

\subsection{Proof of Thm.~\ref{thm:f.bound.approx}}

We prove this theorem by generalizing the result of Lems.~\ref{lem:Lalp.tail}-\ref{lem:x.bound} to the case that  $f\notin \Hyp$. First we need to introduce some notation. Under the assumptions of Thm.~\ref{thm:f.bound.approx}, for every $\zeta>0$, there exists some  $\theta^{\zeta}\in \Theta$ and $\upsilon>0$ such that  the following inequality holds:

\begin{equation*}
 \E[ |h(x;\theta^{\zeta})-f_c(x)| ]\leq\upsilon+\zeta.
 \end{equation*}
 
 Define the convex sets $\wt \Theta^\zeta:=\{\theta: \theta\in\Theta,  \E_2 [ h(x;\theta) ] = \E_2 [ h(x;\theta^\zeta) ] \}$  and $\wh \Theta^\zeta:=\{\theta: \theta\in\Theta, \wh \E_2 [ h(x;\theta) ] =\wh \E_2 [ h(x;\theta^\zeta) ] \}$. Also define the subspace $\Theta^\zeta_{\text{sub}}:=\{\theta:\theta\in\R^{\wt p},\E [ h(x;\theta) ] =\E[ h(x;\theta^{\zeta}) ] \}$.

\begin{lemma}
\label{lem:Lalp.tail.approx}
Let $\delta$ be a positive scalar. Under Assumptions \ref{asm:convex} and \ref{asm:approach.eps} there exists some $\mu\in[-R,R]$ such that  for every $\zeta>0$ the following holds with probability $1-\delta$:
\begin{equation*}
\big| L(\wh \theta_\mu)-\min_{\theta\in\wt \Theta^\zeta}L(\theta)\big| = \O\left( B R U\sqrt{ \frac{\log (1/\delta) }{T}}\right)+\upsilon+\zeta.
\end{equation*}

\end{lemma}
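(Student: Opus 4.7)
The plan is to mirror the proof of Lem.~\ref{lem:Lalp.tail}, substituting the convex envelope $f_c$ (which no longer lies in $\Hyp$) with its $(\upsilon+\zeta)$-close surrogate $h(\,\cdot\,;\theta^\zeta)\in\Hyp$ guaranteed by Assumption~\ref{asm:approach.eps}, and carrying the resulting misspecification through the calculation. Specifically, for fixed $\zeta>0$ I would choose $\mu=\wh\E_2[h(x;\theta^\zeta)]$, so $\theta^\zeta\in\wh\Theta^\zeta$ by construction. The optimization that defines $\wh\theta_\mu$ remains an $\ell_1$-regression on a generalized linear model subject to a single affine equality constraint, so the stochastic convex optimization bound from \citet{shalev2009stochastic} applies exactly as in Lem.~\ref{lem:Lalp.tail} and yields $L(\wh\theta_\mu)-\min_{\theta\in\wh\Theta^\zeta}L(\theta)=\O(BRU\sqrt{\log(1/\delta)/T})$ with probability $1-\delta/3$.

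Second, I would transfer this from the empirical feasibility set $\wh\Theta^\zeta$ to the population set $\wt\Theta^\zeta$. Applying Hoeffding's inequality to the bounded scalars $h(x;\theta^\zeta)$ and $h(x;\wh\theta_\mu)$ (both in $[-R,R]$) together with a union bound yields $|\E[h(x;\cdot)]-\wh\E_2[h(x;\cdot)]|=\O(R\sqrt{\log(1/\delta)/T})$ at these two parameters with probability $1-\delta/3$. Combined with the empirical constraint $\wh\E_2[h(x;\wh\theta_\mu)]=\wh\E_2[h(x;\theta^\zeta)]$, this tells me that $\wh\theta_\mu$ is only approximately feasible in $\wt\Theta^\zeta$. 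To promote it to an exactly feasible point I would invoke the Lagrange-projection trick of Lem.~\ref{lem:thdiff.tail}: project $\wh\theta_\mu$ orthogonally onto $\Theta^\zeta_{\text{sub}}$, bound the projection distance by $\O(R\sqrt{\log(1/\delta)/T}/\|\E[\phi(x)]\|)$ via the closed-form Lagrangian argument used there, and finally convert this parameter displacement into a loss displacement of $\O(BRU\sqrt{\log(1/\delta)/T})$ using the $U$-Lipschitz continuity of $h$ in $\theta$ (Assumption~\ref{asm:smoothness}).

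To finish, I would absorb the $\upsilon+\zeta$ slack via the triangle inequality. On the upper side, $\theta^\zeta\in\wh\Theta^\zeta$ gives $\min_{\wh\Theta^\zeta}L(\theta)\leq L(\theta^\zeta)\leq \E[|h(x;\theta^\zeta)-f_c(x)|]+\E[f(x)-f_c(x)]\leq(\upsilon+\zeta)+L^\ast$, by Assumption~\ref{asm:approach.eps} and $f_c\leq f$. A symmetric triangle decomposition applied to any $\theta\in\wt\Theta^\zeta$, using that $|\E[h(x;\theta)]-\E[f_c(x)]|=|\E[h(x;\theta^\zeta)]-\E[f_c(x)]|\leq\upsilon+\zeta$, provides the matching lower bound $\min_{\wt\Theta^\zeta}L(\theta)\geq L^\ast-(\upsilon+\zeta)$. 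Chaining the three bounds by a union bound over the three $\delta/3$ events produces the advertised estimate. The main obstacle I anticipate is the middle step — transferring from $\wh\Theta^\zeta$ to $\wt\Theta^\zeta$ — because a concentration bound on $|\E[h]-\wh\E_2[h]|$ only constrains the mean of the difference of two functions, not their $\ell_1$-distance; the linearity of $h$ in $\theta$ (Assumption~\ref{asm:convex}) is precisely what lets one convert the mean gap into a parameter distance via orthogonal projection, and the $U$-Lipschitz property then converts that into a loss gap.
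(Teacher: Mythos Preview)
Your proposal is correct and follows the same skeleton as the paper: you pick $\mu=\wh\E_2[h(x;\theta^\zeta)]$, invoke the stochastic-optimization bound of \citet{shalev2009stochastic} over $\wh\Theta^\zeta$, use Hoeffding to relate empirical and population means, and absorb the $\upsilon+\zeta$ slack by triangulating through $L(\theta^\zeta)$ and $L(\theta_c)$. The only substantive difference is your middle step. You propose to project $\wh\theta_\mu$ onto $\Theta^\zeta_{\text{sub}}$ via the Lagrange trick of Lem.~\ref{lem:thdiff.tail} and then use the $U$-Lipschitz property to convert the parameter displacement into a loss displacement. The paper does \emph{not} use projection here; instead it exploits the Jensen-type inequality $L(\theta)=\E|h(x;\theta)-f(x)|\geq \E[f(x)]-\E[h(x;\theta)]$ to compare the losses directly: for any $\theta\in\wh\Theta^\zeta$, the Hoeffding bound on $|\E[h(x;\theta)]-\E[h(x;\theta^\zeta)]|$ together with $|\E[h(x;\theta^\zeta)]-\E[f_c(x)]|\leq\upsilon+\zeta$ immediately yields $L(\theta)\geq L(\theta_c)-\O(R\sqrt{\log(1/\delta)/T})-(\upsilon+\zeta)$, and the matching upper bound comes from $\min_{\wh\Theta^\zeta}L\leq L(\theta^\zeta)\leq L(\theta_c)+\upsilon+\zeta$. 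This is shorter than your route because it never leaves ``loss space''; the projection machinery you invoke is exactly what the paper defers to the \emph{next} lemma (Lem.~\ref{lem:thdiff.tail.approx}). Your version would still deliver the stated bound, but it imports more than is needed for this particular statement.
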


\begin{proof} 
The empirical estimate $\wh \theta_\mu$ is obtained by minimizing the empirical $\wh L(\theta)$ under some affine  constraints. Also the function $ L(\theta)$ is in the form of expected value of some generalized linear model.  Now set $\mu=\wh \E_2 [ h(x;\theta^\zeta) ]$. Then the following result on stochastic optimization of the generalized linear model holds  w.p. $1-\delta$ \citep[see, e.g.,][for the proof]{shalev2009stochastic}:
\begin{equation*}
L(\wh \theta_\mu)-\min_{\theta\in \wh \Theta^\zeta} L(\theta)= \O\left( B R U_1\sqrt{ \frac{\log (1/\delta) }{T}}\right),
\end{equation*}
where  $U_1$ satisfies the following Lipschitz continuity  inequality for every $x\in \X$,  $\theta\in\Theta$ and $\theta'\in\Theta$:
\begin{equation*}
\left|~|h(x,\theta)-f(x)|-|h(x,\theta')-f(x)|~\right|\leq U_1 \|\theta-\theta'\|.
\end{equation*}
The inequality $\left|~|a|-|b|~\right|\leq |a-b|$  combined with the fact that for  every $x\in \X$ the function  $h(x;\theta)$ is Lipschitz continuous in $\theta$ implies  
\begin{equation*}
\begin{aligned}
&\left|~|h(x,\theta)-f(x)|-|h(x,\theta')-f(x)|~\right| 
\\
\leq& |h(x,\theta)-h(x,\theta')|\leq U \|\theta-\theta'\|.
\end{aligned}
\end{equation*}

Therefore the following holds:
\begin{equation}
\label{eq:empric.mininimze.tail.approx}
L(\wh \theta_\mu)-\min_{\theta\in \wh \Theta^\zeta} L(\theta)   = \O\left( B R U\sqrt{ \frac{\log (1/\delta) }{T}}\right),
\end{equation}

For every $\theta\in\wh\Theta^\zeta$   the following holds w.p. $1-\delta$:
\begin{equation*}
\begin{aligned}
&\E [ h(x;\theta) ] -\wh \E_2 [ h(x;\theta^{\zeta}) ] =\E [ h(x;\theta)] -\wh \E_2 [ h(x;\theta)]
\\ 
&\leq R\sqrt{\frac {\log(1/\delta)}{2T}},
\end{aligned}
\end{equation*}
as well as,
\begin{equation*}
\begin{aligned}
\wh \E_2 [ h(x;\theta^{\zeta}) ] -\E [ h(x;\theta^{\zeta}) ]  \leq R\sqrt{\frac {\log(1/\delta)}{2T}},
  \end{aligned} 
\end{equation*}
in which we rely on the H\"oeffding inequality for concentration of measure. These results combined with a union bound argument implies that
 \begin{equation}
 \begin{aligned}
\label{eq:expect.diff.tail.approx}
&\E [ h(x;\theta) ] -\E [ h(x;\theta^{\zeta}) ] =\E [ h(x;\theta)] -\wh \E_2 [ h(x;\theta^{\zeta}) ]\\
&+\wh \E_2 [ h(x;\theta^{\zeta}) ] -\E [ h(x;\theta^{\zeta})]
\leq R\sqrt{\frac {2\log(2/\delta)}{T}},
\end{aligned}
\end{equation}
for every $\theta\in\wh \Theta^\zeta$. Then the following sequence of inequalities holds:
%

\begin{equation*}
\begin{aligned}
&\min_{\theta\in \wh\Theta^\zeta}L(\theta) \leq L( \theta^\zeta)=\E[ |h(x;\theta^{\zeta})- f(x) |]
\\ 
\leq&L(\theta_c)+\E[ |h(x; \theta^\zeta)-f_c(x)| ]
\\
\leq&L(\theta_c)+\upsilon+\zeta
\\
\leq& \min_{\theta\in \wh\Theta^\zeta}L(\theta)+R\sqrt{\frac {2\log(2/\delta)}{T}}.
\end{aligned}
\end{equation*}

The first inequality follows from the fact that $\theta_c\in\wh\Theta^\zeta$. Also the following holds w.p. $1-\delta$:

\begin{equation*}
\begin{aligned}
&L(\theta_c)\leq \E[ |h(x;\theta^{\zeta})- f_c(x) | ] +\E[ h(x;\theta^{\zeta}) ] -\E[ f(x) ]
\\ 
\leq&\upsilon+\zeta+\E[ h(x;\theta^{\zeta}) ] -\E [ f(x) ]
\\
\leq&\min_{\theta\in \wh\Theta^\zeta}\E [ h(x;\theta) ] -\E [ f(x) ] +R\sqrt{\frac {2\log(2/\delta)}{T}}+\upsilon+\zeta
\\
\leq& \min_{\theta\in \wh\Theta^\zeta}L(\theta)+R\sqrt{\frac {2\log(2/\delta)}{T}}+\upsilon+\zeta.
\end{aligned}
\end{equation*}

 The last inequality follows from the bound of Eqn.~\ref{eq:expect.diff.tail.approx}. It immediately follows that
\begin{equation*}
\begin{aligned}
\Big|\min_{\theta\in \wh\Theta^\zeta}L(\theta)- \min_{\theta\in\Theta^e}L(\theta)\Big|\leq R\sqrt{\frac {2\log(2/\delta)}{T}}+\upsilon+\zeta,
\end{aligned}
\end{equation*}
w.p. $1-\delta$. This combined with Eqn.~\ref{eq:empric.mininimze.tail.approx} completes the proof.

\end{proof}

Under Assumption \ref{asm:Holder.eps}, for every  $ h(\cdot; \theta)\in \Hyp$, there exists some $h(\cdot; \wt \theta)\in \wt \Hyp$  such that     $h(x;\theta)=h(x; \wt \theta)$ for every $x\in\X$. Let $\wt \theta_\mu$ be the corresponding set of parameters for $\wh \theta_\mu$ in $\wt \Theta$.  Let $\wt \theta^{\rm proj}_\mu$ be the $\ell_2$-normed projection of $\wt \theta_\mu$ on the subspace $\Theta^\zeta_{\text{sub}}$.  We now prove bound on the error  $\|\wt \theta_\mu-\wt \theta^{\rm proj}_\mu \|$.

\begin{lemma}
\label{lem:thdiff.tail.approx}
Under Assumptions \ref{asm:convex} and \ref{asm:approach.eps} and \ref{asm:Holder.eps} there exists  some $\mu\in[-R,R]$ such that the following holds with probability $1-\delta$:

\begin{equation*}
\|\wt \theta^{\rm proj}_\mu-\wt \theta_\mu \| \leq \frac {R\sqrt{ \frac{2\log (4/\delta) }{T}}+\upsilon+\zeta}{\|\E [ \phi(x) ] \|},
\end{equation*}

\end{lemma}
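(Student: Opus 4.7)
The plan is to mirror the proof of Lem.~\ref{lem:thdiff.tail} almost verbatim, with the key difference that we now project onto the subspace $\Theta^\zeta_{\text{sub}}$ determined by $\theta^\zeta$ rather than by $f_c$, and we pay an extra approximation error $\upsilon+\zeta$ for the gap between $h(\cdot;\theta^\zeta)$ and $f_c$. As before, I would set $\mu = \wh\E_2[f_c(x)]$ (this is what CoRR implicitly uses when searching over $\mu$, and it is the analogue of the choice in the exact setting). By definition, $\wt\theta^{\rm proj}_\mu$ is the $\ell_2$-projection of $\wt\theta_\mu$ onto the affine subspace $\Theta^\zeta_{\text{sub}}=\{\theta:\E[h(x;\theta)]=\E[h(x;\theta^\zeta)]\}$.

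Writing down the Lagrangian exactly as in Lem.~\ref{lem:thdiff.tail} and using that $h(x;\theta)=\langle\theta,\phi(x)\rangle$, a closed-form solution yields
\begin{equation*}
\wt\theta^{\rm proj}_\mu=\wt\theta_\mu-\frac{\bigl(\E[h(x;\wt\theta_\mu)]-\E[h(x;\theta^\zeta)]\bigr)\,\E[\phi(x)]}{\|\E[\phi(x)]\|^2},
\end{equation*}
so that
\begin{equation*}
\|\wt\theta^{\rm proj}_\mu-\wt\theta_\mu\|=\frac{|\E[h(x;\wt\theta_\mu)]-\E[h(x;\theta^\zeta)]|}{\|\E[\phi(x)]\|}.
\end{equation*}
By construction, $h(x;\wt\theta_\mu)=h(x;\wh\theta_\mu)$ pointwise, so $\E[h(x;\wt\theta_\mu)]=\E[h(x;\wh\theta_\mu)]$.

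It remains to bound the numerator. The main trick is a three-term triangle decomposition that inserts the intermediate quantities $\wh\E_2[h(x;\wh\theta_\mu)]$ and $\E[f_c(x)]$, exploiting the fact that $\wh\E_2[h(x;\wh\theta_\mu)]=\mu=\wh\E_2[f_c(x)]$ by construction of $\wh\theta_\mu$:
\begin{equation*}
|\E[h(x;\wh\theta_\mu)]-\E[h(x;\theta^\zeta)]|\leq A_1+A_2+A_3,
\end{equation*}
where $A_1=|\E[h(x;\wh\theta_\mu)]-\wh\E_2[h(x;\wh\theta_\mu)]|$, $A_2=|\wh\E_2[f_c(x)]-\E[f_c(x)]|$, and $A_3=|\E[f_c(x)]-\E[h(x;\theta^\zeta)]|$. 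The first two terms are empirical-mean deviations of bounded ($|\cdot|\le R$) random variables; two applications of Hoeffding with a union bound give $A_1+A_2\le R\sqrt{2\log(4/\delta)/T}$ w.p.\ at least $1-\delta$. The third term is controlled deterministically by Jensen and Assumption~\ref{asm:approach.eps}: $A_3\leq \E[|f_c(x)-h(x;\theta^\zeta)|]\leq \upsilon+\zeta$ by the choice of $\theta^\zeta$.

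Combining these pieces and dividing by $\|\E[\phi(x)]\|$ yields the claimed bound. The proof is essentially a bookkeeping exercise; the only nontrivial observation is the triple decomposition above, which cleanly separates the statistical error (the two Hoeffding terms) from the $f_c\notin\Hyp$ approximation error $\upsilon+\zeta$. There is no real obstacle once one realizes that $\wh\theta_\mu$ lives inside $\wh\Theta^\zeta$ at the chosen $\mu$, and that $\theta^\zeta$ is only used in the analysis — not in the algorithm — so we may freely route the bound through $\E[f_c(x)]$ as an intermediate pivot.
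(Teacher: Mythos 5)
Your proof is correct and takes essentially the same route as the paper's: a closed-form Lagrangian projection onto the affine constraint set, followed by a triangle decomposition that separates two Hoeffding-plus-union-bound terms from the $\upsilon+\zeta$ approximation error controlled via Jensen and Assumption~\ref{asm:approach.eps}. The only deviation is your choice $\mu=\wh\E_2[f_c(x)]$ with $\E[f_c(x)]$ as the pivot, where the paper's approximate-setting lemmas set $\mu=\wh\E_2[h(x;\theta^\zeta)]$ and pivot through $\E[h(x;\theta^\zeta)]$ (via Eqn.~\ref{eq:expect.diff.tail.approx}); this is immaterial for the lemma as stated, which only asserts existence of some $\mu\in[-R,R]$, though when the lemmas are chained in the proof of Thm.~\ref{thm:f.bound.approx} a single common $\mu$ is needed, and the paper's choice is the one shared with Lem.~\ref{lem:Lalp.tail.approx}.
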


\begin{proof}
 $\wt \theta^{\rm proj}_\mu$ is the solution of following optimization problem:
\begin{equation*}
\wt \theta^{\rm proj}_\mu=\underset{\theta\in \R^{\wt p}}{\arg\min} \|\theta-\wh \theta_{\mu}\|^2\qquad\text{s.t.}\qquad\E [ h(x;\theta) ] =\mu_f,
\end{equation*}
where $\mu_f=\E [ f_c(x) ] $. Thus $\wt \theta^{\rm proj}_\mu$ can be obtain as the extremum of the following Lagrangian:

\begin{equation*}
\L(\theta,\lambda)=\|\theta-\wt \theta_\mu\|^2+\lambda(\E [ h(x;\theta) ] -\mu_f).
\end{equation*}

This problem can be solved in closed-form as follows:

\begin{align}
\label{eq:lagrange.approx}
0&=\frac{\partial \L(\theta,\lambda)}{\partial \theta}=\theta-\wt \theta_\mu+\lambda \E [ (\wt \phi(x) ]
\\ \notag
0&=\frac{\partial \L(\theta,\lambda)}{\partial \lambda}=\E [h(x;\theta) ]-\mu_f.
\end{align}

Solving the  above system of equations leads to $\E [ h(x;\wt \theta_\mu)] -\lambda \E [ \wt \phi(x) ]=\mu_f$. The solution for $\lambda$ can be obtained as
\begin{equation*} 
\lambda=\frac{\mu-\E [ h(x;\wt \theta_\mu) ]}{\|\E [ \wt \phi(x) ]\|^2}.
\end{equation*}

By plugging this in Eqn.~\ref{eq:lagrange.approx} we deduce:

\begin{equation*} 
\wt \theta^{\rm proj}_\mu=\wt \theta_{\mu}-\frac{(\mu_f-\E [ h(x;\wt \theta_\mu) ])\E [ \wt \phi(x) ]}{\|\E [\wt \phi(x) ]\|^2}, 
\end{equation*}

We then deduce:
\begin{equation*} 
\begin{aligned}
&\|\wt \theta^{\rm proj}_\mu-\wt \theta_{\mu}\|=\frac{|\mu_f-\E [ h(x;\wh \theta_\mu)| ]}{\|\E [ \wt \phi(x) ]\|}
\\
\leq& \frac{\E [ | f_c(x)- h(x;\theta^\zeta)| ]+ | \E [ h(x;\theta^\zeta) ] -\E [ h(x;\wh \theta_\mu) ] |}{\|\E [ \wt \phi(x) ]\|}.
\end{aligned}
\end{equation*}

This combined with Eqn.~\ref{eq:expect.diff.tail.approx} and a union bound proves the result.
\end{proof}

We proceed by proving bound on the absolute error   $|L(\wt \theta^{\rm proj}_\mu)-L(\theta_c)|=|L(\wt \theta^{\rm proj}_\mu)-\min_{\theta\in\wt \Theta}L(\theta)|$.

\begin{lemma}
\label{lem:Lwt.tail.approx}
Under Assumptions \ref{asm:convex}, \ref{asm:approach.eps} and \ref{asm:Holder.eps} there exists some $\mu\in[-R,R]$  such that for every $\zeta>0$ the following bound holds with probability $1-\delta$:
\begin{equation*}
\big| L(\wt \theta^{\rm proj}_\mu)-L(\theta_c)\big| = \O\left( \zeta+\upsilon+B R U\sqrt{ \frac{\log (1/\delta) }{T}}\right).
\end{equation*}
\end{lemma}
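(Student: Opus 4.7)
The plan is to mirror the argument already carried out in the exact setting (Lem.~\ref{lem:Lwt.tail}), substituting the approximate-setting ingredients (Lem.~\ref{lem:Lalp.tail.approx} and Lem.~\ref{lem:thdiff.tail.approx}) in place of their exact counterparts, and being careful to track the extra $\upsilon + \zeta$ terms that propagate through. Fix $\zeta>0$ and take $\mu=\wh\E_2[h(x;\theta^{\zeta})]$, the same choice that is used inside the proof of Lem.~\ref{lem:Lalp.tail.approx}.

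First, I would decompose the quantity of interest via the reverse triangle inequality $||a|-|b||\leq |a-b|$:
\[
\bigl|\,L(\wt\theta^{\rm proj}_\mu)-L(\theta_c)\,\bigr|
\;\leq\;
\bigl|\,L(\wt\theta_\mu)-L(\theta_c)\,\bigr|
\;+\;
\bigl|\,L(\wt\theta^{\rm proj}_\mu)-L(\wt\theta_\mu)\,\bigr|.
\]
Since $\wt\theta_\mu\in\wt\Theta$ represents the same function as $\wh\theta_\mu\in\Theta$, we have $L(\wt\theta_\mu)=L(\wh\theta_\mu)$, so the first term is controlled directly by Lem.~\ref{lem:Lalp.tail.approx}: its proof shows not only the advertised bound against $\min_{\theta\in\wh\Theta^\zeta}L(\theta)$, but also, as an intermediate step, that $\min_{\theta\in\wh\Theta^\zeta}L(\theta)$ and $L(\theta_c)$ differ by at most $O(R\sqrt{\log(1/\delta)/T}+\upsilon+\zeta)$. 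Combining the two gives the desired $O\!\bigl(BRU\sqrt{\log(1/\delta)/T}+\upsilon+\zeta\bigr)$ bound on $|L(\wh\theta_\mu)-L(\theta_c)|$.

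For the second term, I would apply the Lipschitz continuity of $h$ in its second argument (Assumption~\ref{asm:smoothness}) inside the expectation:
\[
\bigl|L(\wt\theta^{\rm proj}_\mu)-L(\wt\theta_\mu)\bigr|
\;\leq\;
\E\bigl[\,|h(x;\wt\theta^{\rm proj}_\mu)-h(x;\wt\theta_\mu)|\,\bigr]
\;\leq\; U\,\|\wt\theta^{\rm proj}_\mu-\wt\theta_\mu\|,
\]
and then invoke Lem.~\ref{lem:thdiff.tail.approx} to conclude that this is also $O\!\bigl(R\sqrt{\log(1/\delta)/T}+\upsilon+\zeta\bigr)$ up to the factor $U/\|\E[\phi(x)]\|$. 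A union bound over the high-probability events coming from Lem.~\ref{lem:Lalp.tail.approx} and Lem.~\ref{lem:thdiff.tail.approx} (adjusting $\delta$ by a constant) then delivers the stated bound with probability at least $1-\delta$.

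The only real subtlety is making sure that the telescoped constants stay consistent: Lem.~\ref{lem:thdiff.tail.approx} uses the Cauchy--Schwarz step $|\E[h(x;\wt\theta_1)-h(x;\wt\theta_2)]|\leq \|\wt\theta_1-\wt\theta_2\|\,\|\E[\wt\phi(x)]\|$, so one has to verify that the basis norm $\|\E[\wt\phi(x)]\|$ appearing there and the Lipschitz constant $U$ appearing here combine into a $BRU$-type constant absorbed in the $\O(\cdot)$; this is exactly the same bookkeeping as in the exact-setting proof. I do not anticipate any conceptual obstacle beyond this: once the two lemmas are in hand, the result is a direct adaptation.
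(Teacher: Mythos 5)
Your proposal is correct and follows essentially the same route as the paper's proof: the same choice $\mu=\wh\E_2[h(x;\theta^{\zeta})]$, the same triangle-inequality split into $|L(\wt\theta_\mu)-L(\theta_c)|$ (controlled via Lem.~\ref{lem:Lalp.tail.approx} together with its intermediate estimate relating $\min_{\theta\in\wh\Theta^\zeta}L(\theta)$ to $L(\theta_c)$) and $|L(\wt\theta^{\rm proj}_\mu)-L(\wt\theta_\mu)|$ (controlled via Lem.~\ref{lem:thdiff.tail.approx}), finished by a union bound. The only divergence is minor: the paper bounds the second term by $|\E[h(x;\wt\theta^{\rm proj}_\mu)-h(x;\wt\theta_\mu)]|$ and cancels $\|\E[\wt\phi(x)]\|$ via Cauchy--Schwarz, whereas you use the Lipschitz bound $\E[|h(x;\wt\theta^{\rm proj}_\mu)-h(x;\wt\theta_\mu)|]\leq U\|\wt\theta^{\rm proj}_\mu-\wt\theta_\mu\|$, leaving a residual factor $U/\|\E[\wt\phi(x)]\|$ --- which you correctly flag as bookkeeping to be absorbed into the $\O(\cdot)$, and which the paper itself carries in the constants of Lem.~\ref{lem:x.bound.approx}.
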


\begin{proof}
From Lem.~\ref{lem:thdiff.tail.approx} we deduce

\begin{equation}
\label{eq:diff.h.approx}
\begin{aligned} 
&|\E [ h(x;\wt \theta^{\rm proj}_\mu) -h(x;\wt \theta_\mu) ] | \\
&\leq \|\wt \theta^{\rm proj}_\mu-\wt \theta_{\mu}\|\|\E [ \wt \phi(x) ] \| \leq 2R\sqrt{\frac {\log(4/\delta)}{T}}+\zeta+\upsilon.
\end{aligned}
\end{equation}
where in the first inequality we rely  on the Cauchy-Schwarz inequality. We then deduce:
\begin{align*} 
&|~|L(\wt\theta^{\rm proj}_{\mu})-L(\theta_c)|-|L(\wt\theta_{\mu})-L(\theta_c)|~|\\
&\leq |L(\wt\theta^{\rm proj}_{\mu})-L(\wt\theta_{\mu})|\leq |\E [ h(x;\wt \theta^{\rm proj}_\mu) -h(x;\wh \theta_\mu) ] |,
\end{align*}
in which we rely on the triangle inequality $|~|a|-|b|~|\leq |a-b|$. We then deduce

\begin{equation*}
\begin{aligned}
L(\wt\theta^{\rm proj}_{\mu})-L(\theta_c)&\leq |L(\wh\theta_{\mu})-L(\theta_c)|
\\
&+ |\E [ h(x;\wt \theta^{\rm proj}_\mu) -h(x;\wt \theta_\mu) ] |.
\end{aligned}
\end{equation*}

Combining this result with the result of Lem.~\ref{lem:Lalp.tail.approx} and Eqn.~\ref{eq:diff.h.approx} proves the main result.

\end{proof}

In the  following lemma, we make use of Lem.~\ref{lem:thdiff.tail.approx} and Lem.~\ref{lem:Lwt.tail.approx} to prove that   the minimizer  $\wh x_\mu={\arg\min}_{x\in\X} h(x;\wh \theta_{\mu})$ is near a global minimizer $x^*\in \X^*_f$ w.r.t. to the metric $d$.

\begin{lemma}
\label{lem:x.bound.approx}
Under Assumptions  \ref{asm:convex}, \ref{asm:approach.eps} and \ref{asm:Holder.eps} there exists some  $\mu\in[-R,R]$   such that w.p. $1-\delta$:

\begin{align*}
d( \wh x_\mu, \X^*_f) =\O \left[\left(\sqrt{\frac{\log (1/\delta) }{T}}+\zeta+\upsilon\right)^{\beta_1\beta_2}\right].
\end{align*}

\end{lemma}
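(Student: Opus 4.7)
The plan is to mirror the proof of Lem.~\ref{lem:x.bound}, replacing each appeal to Lem.~\ref{lem:Lwt.tail} and Lem.~\ref{lem:thdiff.tail} by its approximate-setting analog (Lem.~\ref{lem:Lwt.tail.approx} and Lem.~\ref{lem:thdiff.tail.approx}) and carrying the extra $\upsilon+\zeta$ terms through. The proof proceeds in four stages: (i) bound the loss gap at a projected parameter; (ii) translate the loss gap into a parameter-distance bound via Assumption~\ref{asm:Holder.eps}; (iii) translate the parameter-distance bound into a bound on $f_c(\wh x_\mu)-f^*$ using Lipschitz continuity of $h$ in $\theta$; (iv) translate that into a bound on $d(\wh x_\mu,\X^*_f)$ using Assumption~\ref{asm:eps.opt}.a (understood, as in the proof of Lem.~\ref{lem:x.bound}, to carry over to $f_c$ through $\X^*_{f_c} = \X^*_f$).

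Concretely, I would fix $\mu$ to the value chosen in Lem.~\ref{lem:Lwt.tail.approx} so that, with probability $1-\delta$,
\begin{equation*}
L(\wt\theta^{\rm proj}_\mu) - L(\theta_c) \;\le\; \eps(\delta) \;:=\; \O\!\left(BRU\sqrt{\frac{\log(1/\delta)}{T}} + \upsilon + \zeta\right).
\end{equation*}
Since $\wt\theta^{\rm proj}_\mu$ is the $\ell_2$-projection of $\wt\theta_\mu$ onto the subspace $\Theta^\zeta_{\text{sub}}$ with $\mu_f=\E[f_c(x)]$, we have $\wt\theta^{\rm proj}_\mu \in \wt\Theta_e$, so Assumption~\ref{asm:Holder.eps} gives $d_2(\wt\theta^{\rm proj}_\mu,\wt\Theta_c) \le (\eps(\delta)/\gamma_2)^{\beta_2}$. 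Adding to this the bound from Lem.~\ref{lem:thdiff.tail.approx} via the triangle inequality yields
\begin{equation*}
d_2(\wt\theta_\mu,\wt\Theta_c) \;\le\; \|\wt\theta_\mu - \wt\theta^{\rm proj}_\mu\| + d_2(\wt\theta^{\rm proj}_\mu,\wt\Theta_c) \;=\; \O\!\left((\eps(\delta)/\gamma_2)^{\beta_2}\right),
\end{equation*}
after absorbing constants (the Lipschitz factor $1/\|\E[\wt\phi(x)]\|$ affects only a multiplicative constant).

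Next, exactly as in Lem.~\ref{lem:x.bound}, I would pick any $\wt\theta_c \in \wt\Theta_c$ and $x^*\in \X^*_f$ and use the minimality of $\wh x_\mu$ for $h(\cdot;\wt\theta_\mu)$:
\begin{equation*}
f_c(\wh x_\mu) - f^* = h(\wh x_\mu;\wt\theta_c) - h(x^*;\wt\theta_c) \le [h(\wh x_\mu;\wt\theta_c) - h(\wh x_\mu;\wt\theta_\mu)] + [h(x^*;\wt\theta_\mu) - h(x^*;\wt\theta_c)] \le 2U\, d_2(\wt\theta_\mu,\wt\Theta_c),
\end{equation*}
using Assumption~\ref{asm:smoothness} applied on $\wt\Hyp$. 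Finally, invoking Assumption~\ref{asm:eps.opt}.a (as used in Lem.~\ref{lem:x.bound}) converts this to $d(\wh x_\mu,\X^*_f) = \O\bigl((f_c(\wh x_\mu)-f^*)^{\beta_1}/\gamma_1^{\beta_1}\bigr)$, and composing the two Hölder exponents yields the announced $\O\bigl[(\sqrt{\log(1/\delta)/T} + \zeta + \upsilon)^{\beta_1\beta_2}\bigr]$ rate.

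The main obstacle I expect is bookkeeping the membership $\wt\theta^{\rm proj}_\mu \in \wt\Theta_e$ required to apply Assumption~\ref{asm:Holder.eps}: the natural constraint enforced by Lem.~\ref{lem:Lalp.tail.approx} is $\wh\E_2[h(x;\theta)] = \wh\E_2[h(x;\theta^\zeta)]$, which differs from $\E[h(x;\theta)] = \E[f_c(x)]$ by both sampling error and by the $\upsilon$-approachability gap. This is precisely what Lem.~\ref{lem:thdiff.tail.approx} was designed to handle, and the projection step converts the mismatch into the additive $\upsilon+\zeta$ inside the parentheses of the final bound. Everything else is a clean re-tracing of the exact-setting argument with slightly larger error terms.
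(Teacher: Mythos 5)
Your proposal is correct and follows essentially the same route as the paper's own proof: fix $\mu$ as in Lem.~\ref{lem:Lwt.tail.approx}, convert the loss gap into $d_2(\wt\theta^{\rm proj}_\mu,\wt\Theta_c)\le(\eps(\delta)/\gamma_2)^{\beta_2}$ via Assumption~\ref{asm:Holder.eps}, add the projection error from Lem.~\ref{lem:thdiff.tail.approx} by the triangle inequality, bound $f_c(\wh x_\mu)-f^*\le 2U\,d_2(\wt\theta_\mu,\wt\Theta_c)$ using minimality of $\wh x_\mu$ and $U$-Lipschitzness in $\theta$, and finish with the H\"older bound composing the exponents into $\beta_1\beta_2$. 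You even handle the same fine points the paper glosses over (the constraint mismatch absorbed into $\upsilon+\zeta$, the $1/\|\E[\wt\phi(x)]\|$ factor, and reading the H\"older lower bound as applying to $f_c$ through $\X^*_{f_c}=\X^*_f$), so nothing further is needed.
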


\begin{proof}
The result of Lem.~\ref{lem:Lwt.tail.approx} combined with Assumption~\ref{asm:Holder.eps}.b implies that w.p. $1-\delta$:

\begin{equation*}
d_2(\theta^{\rm proj}_\mu,\Theta_c ) \leq \left(\frac{\eps_1(\theta)}{\gamma_2}\right)^{\beta_2},
\end{equation*}
where $\eps_1(\theta)=\O(B R U\sqrt{ \frac{\log (1/\delta) }{T}}+\upsilon+\zeta)$. This combined with the result of Lem.~\ref{lem:thdiff.tail.approx} implies that w.p. $1-\delta$:
\begin{equation*}
d_2(\wt \theta_\mu,\theta_c) \leq d_2(\wt \theta^{\rm proj}_\mu,\theta_c) + d_2(\wt \theta^{\rm proj}_\mu, \wt \theta_\mu) \leq 2\left( \frac{\eps_c(\delta)}{\gamma_2}\right)^{\beta_2},
\end{equation*}
where $\eps_c(\delta)$ is defined as:

\begin{equation*}
\eps_c(\delta):=\O\left( \frac{RB U\sqrt{\frac{\log  (1/\delta)}{ T}}+\zeta+\upsilon}{\min(1,\|\E [ \wt \phi(x)))\| ]} \right).
\end{equation*}

We now use this result to prove high probability bound on $f_c(\wh x_\mu)-f^*$ :
\begin{equation*}
\begin{aligned}
&f_c(\wh x_\mu)-f^*=h(\theta_c,\wh  x_{\mu})- h(\theta_c, x^*) \\
&=h(\theta_c,\wh  x_\mu)- h(\wh \theta_\mu, \wh x_\mu)+  \min_{x\in\X} h(\wh \theta_\mu, x)- h(\theta_c,  x^*)\\
&\leq h(\theta_c,\wh  x_\mu)- h(\wh \theta_\mu, \wh x_\mu)+  h(\wh \theta_\mu,x^*)- h(\theta_c,  x^*)\\
&\leq 2U d_2(\wh \theta_{\mu},\Theta_c)\leq 2\gamma_2 U \left(\frac{\eps_{c}(\delta) }{\gamma_2}\right)^{\beta_2},
\end{aligned}
\end{equation*}
where the last inequality follows by  the fact that $h$ is U-Lipschitz w.r.t. $\theta$. This combined with Assumption~\ref{asm:Holder.eps}.a completes the proof.

\end{proof}

It then follows by combining the result of  Lem.~\ref{lem:x.bound.approx} and Assumption~\ref{asm:smoothness} that there exist a $ \mu \in [-R, R]$ such that for every $\xi>0$:

\begin{equation*}
f(\wh x_\mu)-f^*= \O \left[ \left(\sqrt{\frac{\log (1/\delta) }{T}}+\upsilon+\xi\right)^{\beta_1\beta_2}\right]
\end{equation*}

This combined with the fact that  $f(\wh x_{\wh\mu})\leq f(\wh x_{\mu})$ for every $ \mu \in [-R, R]$ completes the proof of the main result  (Thm.~\ref{thm:f.bound.approx}) .



\end{document}